\newcommand{\bq}{\boldsymbol{q}}
\newcommand{\bg}{\boldsymbol{g}}
\newcommand{\bs}{\boldsymbol{s}}
\newcommand{\bx}{\boldsymbol{x}}
\newcommand{\bu}{\boldsymbol{u}}
\newcommand{\by}{\boldsymbol{y}}
\newcommand{\bz}{\boldsymbol{z}}
\newcommand{\btheta}{\boldsymbol{\theta}}
\newcommand{\argmin}{\mathop{\mathrm{argmin}}}
\newcommand{\interior}{\mathop{\mathrm{int}}}
\newcommand{\dom}{\mathop{\mathrm{dom}}}
\newcommand{\field}[1]{\mathbb{#1}}
\newcommand{\R}{\field{R}}
\DeclareMathOperator{\Regret}{Regret}
\DeclareMathOperator{\Prox}{Prox}
\theoremstyle{plain}
\newtheorem{theorem}{Theorem}[section]
\newtheorem{lemma}[theorem]{Lemma}
\newtheorem{cor}[theorem]{Corollary}
\theoremstyle{definition}
\theoremstyle{remark}
\icmltitlerunning{Generalized Implicit Follow-The-Regularized-Leader}
\begin{document}

\twocolumn[
\icmltitle{Generalized Implicit Follow-The-Regularized-Leader}

\icmlsetsymbol{equal}{*}

\begin{icmlauthorlist}
\icmlauthor{Keyi Chen}{yyy}
\icmlauthor{Francesco Orabona}{yyy}
\end{icmlauthorlist}

\icmlaffiliation{yyy}{Boston University, Boston, MA, USA}

\icmlcorrespondingauthor{Keyi Chen}{keyichen@bu.edu}
\icmlcorrespondingauthor{Francesco Orabona}{francesco@orabona.com}

\vskip 0.3in
]

\printAffiliationsAndNotice{}  

\begin{abstract}
We propose a new class of online learning algorithms, generalized implicit Follow-The-Regularized-Leader (FTRL), that expands the scope of FTRL framework. Generalized implicit FTRL can recover known algorithms, as FTRL with linearized losses and implicit FTRL, and it allows the design of new update rules, as extensions of aProx and Mirror-Prox to FTRL.
Our theory is constructive in the sense that it provides a simple unifying framework to design updates that directly improve the worst-case upper bound on the regret. The key idea is substituting the linearization of the losses with a Fenchel-Young inequality. We show the flexibility of the framework by proving that some known algorithms, like the Mirror-Prox updates, are instantiations of the generalized implicit FTRL. Finally, the new framework allows us to recover the temporal variation bound of implicit OMD, with the same computational complexity.
\end{abstract}

\section{Introduction}
\label{sec:intro}

Online learning is a setting where the learner receives an arbitrary sequence of loss functions, selects points before knowing the loss functions, and is evaluated on the values of the loss functions on the points it selects~\citep{Cesa-BianchiL06,Orabona19,Cesa-BianchiO21}. 
More in detail, at round $t$ the learner outputs a point $\bx_t$ in a feasible set $V\subseteq \R^d$. Then, it receives a loss function $\ell_t: V \to \R$ and it pays the value $\ell_t(\bx_t)$. Given the arbitrary nature of the losses, the learner cannot guarantee to have a small cumulative loss, $\sum_{t=1}^T \ell_t(\bx_t)$. On the other hand, it is possible to minimize the \emph{regret}, that is the difference between the cumulative loss of the algorithm and the one of any arbitrary comparator $\bu \in V$:
\[
\Regret_T(\bu)
\triangleq \sum_{t=1}^T \ell_t(\bx_t) -\sum_{t=1}^T \ell_t(\bu)~.
\]
In particular, a successful online learning algorithm must guarantee a regret that grows sublinearly in time for any $\bu \in V$. In this way, its average performance approaches the one of the best comparator in hindsight.

There are two families of online learning algorithms: Online Mirror Descent (OMD)~\citep{NemirovskijY83,Warmuth97} and Follow-the-Regularized-Leader (FTRL)~\citep{Shalev-Shwartz07,AbernethyHR08,HazanK08}. They stem from two similar but complementary approaches: the update of OMD aims at minimizing a linearization of the current loss without going too far from its previous prediction $\bx_t$, while FTRL minimizes the sum of all the losses (or their linear approximation) plus a regularization term. On the contrary to the first approaches in online learning that focused on specific algorithms (e.g., the Winnow algorithm~\citep{Littlestone88}), the theory of these two frameworks is particularly interesting because it allows \emph{both the design and the analysis of generic online learning algorithms}.

While FTRL and OMD provide similar bounds in most situations, they are not completely equivalent. For example, FTRL has an advantage over OMD in unbounded domains, where it allows to use time-varying regularizers. In fact, OMD allows the use of time-varying stepsizes only in domains where its associated Bregman divergence is bounded.

On the other hand, in the cases where we can use time-varying stepsizes, OMD can achieve a superior adaption to the gradients (see, e.g., Theorem~2 in \citet{StreeterM10} versus Theorem~2 in \citet{OrabonaP15}). In this view, these two frameworks are complementary.\footnote{See also the blog post on this topic by Tim van Erven at \url{https://www.timvanerven.nl/blog/ftrl-vs-omd/}.}
Moreover, there exists another orthogonal axis on the use of the actual loss functions or a linear surrogate for both frameworks.
We summarize all the variants of OMD and FTRL in Table~\ref{table:summary}.

\begin{table*}[t]
\centering
\caption{Summary of implicit and linearized updates for FTRL and OMD. (The Bregman divergence $B_\psi(\bx; \by)$ is defined as $\psi(\bx)-\psi(\by)-\langle \nabla \psi(\by), \bx-\by\rangle$. The $^\star$ denotes the Fenchel conjugate.)}
\label{table:summary}
\adjustbox{max width=\textwidth}{
\begin{tabular}{| l | c|}
\hline
Algorithm & Update \\
\hline
OMD~\citep{Warmuth97} & $\bx_{t+1}=\argmin_{\bx \in V} \ B_\psi(\bx;\bx_t) + \eta_t(\ell_t(\bx_t) + \langle \bg_t, \bx- \bx_t\rangle)$\\
Implicit OMD~\citep{Warmuth97} & $\bx_{t+1}=\argmin_{\bx \in V} \ B_\psi(\bx;\bx_t) + \eta_t \ell_t(\bx)$ \\ 
\hline
FTRL (linearized)~\citep{AbernethyHR08} & $\bx_{t+1}=\argmin_{\bx \in V} \ \psi_{t+1}(\bx) + \sum_{i=1}^{t} (\ell_i(\bx_i) + \langle \bg_i, \bx-  \bx_i\rangle)$ \\
FTRL (full losses)~\citep{McMahan17} & $\bx_{t+1}=\argmin_{\bx \in V} \ \psi_{t+1}(\bx) + \sum_{i=1}^{t} \ell_i(\bx)$ \\ 
Implicit FTRL~\citep{McMahan10} & $\bx_{t+1}=\argmin_{\bx \in V} \ \psi_{t+1}(\bx) + \ell_t(\bx) + \sum_{i=1}^{t-1} (\ell_i(\bx_i) + \langle \bg_i, \bx-  \bx_i\rangle)$\\
\hline
Generalized Implicit FTRL [This work] & $\bx_{t+1}=\argmin_{\bx \in V} \ \psi_{t+1}(\bx) + \sum_{i=1}^t \langle \bz_i, \bx\rangle$ \\
& $\bz_i$ such that $\psi^\star_{i+1,V}(\sum_{j=1}^{i}\bz_j) + \ell^\star_i(\bz_i)\leq \psi^\star_{i+1,V}(\sum_{j=1}^{i-1}\bz_j-\bg_i) + \ell^\star_i(\bg_i)$\\
\hline
\end{tabular}
}
\end{table*}

Our motivation stems from the fact that in practical cases, all the variants that use full losses offer a big advantage in terms of empirical performance at the cost of a higher computational complexity. On the theoretical side, the situation is not so clear given that in the worst case using the full losses can be equivalent to their linearized version, as it should be clear considering linear losses. In particular, the standard theoretical framework for FTRL does not allow a clear analysis of the implicit case.
Moreover, while for implicit OMD it has been proven that one can achieve lower regret if the temporal variation of the losses is small, it is unclear if the same guarantee can be achieved for FTRL without the computational cost of using full losses.

In this paper, we aim at bridging this gap proposing a \emph{generalized} version of implicit FTRL. We go beyond implicit and linearized updates: \emph{we directly construct the update rule in a way that minimizes an upper bound on the regret}.
Our framework effectively expands the scope of the FTRL framework, fully retaining its coupling between design and analysis. Also, our updates come with a worst-case guarantee to never be worse than the standard linearized ones.

We show the flexibility of our framework recovering known update schemes, like the Mirror-Prox update~\cite{Nemirovski04}, or extending updates specifically designed for OMD to the FTRL case, like the aProx one~\cite{AsiD19}. Moreover, for the first time, we show an implicit version of FTRL that recovers the temporal variation bound of implicit OMD~\cite{CampolongoO20}, but with the same computational complexity of implicit OMD.

\paragraph{Related Work}
While there are many works on implicit mirror descent in both the online and offline setting~\citep[see, e.g.,][]{Moreau65,Martinet70,Rockafellar76,KivinenW97,ParikhB14,CampolongoO20,Shtoff22}, the number of works that deal with implicit updates for FTRL is quite limited. We are only aware of \citet{McMahan10}, which quantifies a gain only for specific regularizers. However, the framework in \citet{McMahan10} is non-constructive in the sense that it is difficult to see how to generalize implicit updates. \citet{JoulaniGS17} extends this last result, but it does not provide a link with the maximization of the dual function that governs the regret upper bound.

The closest approach to our framework is the one of \citet{Shalev-ShwartzS07,ShalevS07},  which develop a theory of FTRL updates as maximization of a dual function. However, their framework is limited to a specific shape of regularizers and it does not deal with implicit updates.

For implicit OMD, \citet{CampolongoO20} showed that implicit
updates give rise to regret guarantees that depend on the temporal variability of the losses, so that constant regret is achievable if the variability of the losses is zero. They suggest that FTRL with full losses can achieve the same guarantee, but they also point out that given its computational complexity it would be ``not worth pursuing.''
Here, we show how to achieve the same bound of implicit OMD with our generalized implicit FTRL, while retaining the same computational complexity of implicit OMD.

Proximal updates on truncated linear models were introduced in \citet{AsiD19} for the OMD algorithm.
\citet{ChenLO22} used gradient flow on the same truncated linear models with a coin-betting algorithm~\cite{OrabonaP16}, but their approach does not seem to satisfy a regret guarantee.
\citet{ChenCO22} have used truncated linear models in an FTRL-based parameter-free algorithm~\cite{OrabonaP21} with a novel decomposition of the regret. However, their approach is ad-hoc is it seems difficult to generalize it.

\section{Definitions and Basic Tools}
We define here some basic concepts and tools of convex analysis, we refer the reader to, e.g., \citet{Rockafellar70,BauschkeC11} for a complete introduction to this topic. We will consider extended value function that can assume infinity values too.
A function $f$ is \emph{proper} if it is nowhere $-\infty$ and finite somewhere.
A function $f:V \subseteq \R^d \rightarrow [-\infty, +\infty]$ is \emph{closed} if $\{\bx: f(\bx) \leq \alpha\}$ is closed for every $\alpha \in \R$.
For a proper function $f:\R^d \rightarrow (-\infty, +\infty]$, we define a \emph{subgradient} of $f$ in $\bx \in \R^d$ as a vector $\bg \in \R^d$ that satisfies $f(\by)\geq f(\bx) + \langle \bg, \by-\bx\rangle, \ \forall \by \in \R^d$.
We denote the set of subgradients of $f$ in $\bx$ by $\partial f(\bx)$.
The \emph{indicator function of the set $V$}, $i_V:\R^d\rightarrow (-\infty, +\infty]$, has value $0$ for
$\bx \in V$ and $+\infty$ otherwise.
We denote the \emph{dual norm} of a norm $\|\cdot\|$ by $\|\cdot\|_\star$.
A proper function $f : \R^d \rightarrow (-\infty, +\infty]$ is \emph{$\mu$-strongly convex} over a convex set $V \subseteq \interior \dom f$ w.r.t. $\|\cdot\|$ if $\forall \bx, \by \in V$ and $\forall \bg \in \partial f(\bx)$, we have $f(\by) \geq f(\bx) + \langle \bg , \by - \bx \rangle + \frac{\mu}{2} \| \bx - \by \|^2$.
A function $f:V \rightarrow \R$, differentiable in an open set containing $V$, is \emph{$L$-smooth} w.r.t. $\|\cdot\|$ if  $f(\by) \leq f(\bx) + \langle \nabla f(\bx) , \by - \bx \rangle + \frac{M}{2} \| \bx - \by \|^2$ for all $\bx, \by \in V$.
For a function $f: \R^d\rightarrow [-\infty,\infty]$, we define the \emph{Fenchel conjugate} $f^\star:\R^d \rightarrow [-\infty,\infty]$ as $f^\star(\btheta) = \sup_{\bx \in \R^d} \ \langle \btheta, \bx\rangle - f(\bx)$.
From this definition, we immediately have the Fenchel-Young inequality: $f(\bx) + f^\star(\btheta) \geq \langle \btheta, \bx\rangle, \ \forall \bx, \btheta$.

We will also make use of the following properties of Fenchel conjugates.
\begin{theorem}[{\citep[Theorem~5.7]{Orabona19}}]
\label{thm:props_fenchel}
Let $f:\R^d \rightarrow (-\infty,+\infty]$ be proper. Then, the following conditions are equivalent:
\begin{enumerate}[(a)]
\setlength{\itemsep}{0pt}%
\setlength{\parskip}{0pt}
\vspace{-0.35cm}
\item $\btheta \in \partial f(\bx)$.
\item $\langle \btheta, \by\rangle - f(\by)$ achieves its supremum in $\by$ at $\by=\bx$.
\item $f(\bx)+f^\star(\btheta)=\langle \btheta,\bx\rangle$.
\vspace{-0.35cm}
\end{enumerate}
Moreover, if $f$ is also convex and closed, we have an additional equivalent condition
\begin{enumerate}[(a)]
\setcounter{enumi}{3}
\setlength{\itemsep}{0pt}%
\setlength{\parskip}{0pt}
\vspace{-0.35cm}
\item $\bx \in \partial f^\star(\btheta)$. 
\end{enumerate}
\end{theorem}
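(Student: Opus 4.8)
The plan is to establish the chain (a) $\Leftrightarrow$ (b) $\Leftrightarrow$ (c) using only that $f$ is proper, and then to append (d) under the extra hypotheses via the Fenchel--Moreau biconjugate theorem. Almost everything is a rearrangement of defining inequalities; the only structural input is the biconjugate identity.

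First I would handle (a) $\Leftrightarrow$ (b) by rewriting the subgradient inequality. By definition $\btheta \in \partial f(\bx)$ says $f(\by) \geq f(\bx) + \langle \btheta, \by - \bx\rangle$ for all $\by \in \R^d$; transposing terms, this is equivalent to $\langle \btheta, \by\rangle - f(\by) \leq \langle \btheta, \bx\rangle - f(\bx)$ for all $\by$, which is precisely the assertion that $\by \mapsto \langle \btheta, \by\rangle - f(\by)$ attains its supremum at $\by = \bx$. One remark is worth making: since $f$ is proper, $f^\star(\btheta) > -\infty$ everywhere, so if the supremum is attained at $\bx$ then $\langle\btheta,\bx\rangle - f(\bx)$ is finite, i.e. $f(\bx) < +\infty$; the same conclusion follows from $\partial f(\bx)\neq\emptyset$. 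Thus in (b) and (c) all quantities are genuine reals.

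Next, (b) $\Leftrightarrow$ (c): by the definition $f^\star(\btheta) = \sup_{\by} \langle \btheta, \by\rangle - f(\by)$, the supremum being attained at $\bx$ is the same as $f^\star(\btheta) = \langle \btheta, \bx\rangle - f(\bx)$, and since $f(\bx)$ is finite here this rearranges to $f(\bx) + f^\star(\btheta) = \langle \btheta, \bx\rangle$, and conversely. In other words, in view of the Fenchel--Young inequality $f(\bx) + f^\star(\btheta) \geq \langle\btheta,\bx\rangle$, condition (c) simply says equality holds in Fenchel--Young.

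Finally, for $f$ convex and closed I would obtain (c) $\Leftrightarrow$ (d) by applying the equivalence (a) $\Leftrightarrow$ (c), already proved, to the conjugate $f^\star$ in place of $f$: this yields that $\bx \in \partial f^\star(\btheta)$ iff $f^\star(\btheta) + f^{\star\star}(\bx) = \langle \btheta, \bx\rangle$. Here $f^\star$ is automatically closed and convex as a supremum of affine functions, and it is proper because a proper closed convex $f$ admits an affine minorant, so the earlier equivalences do apply to it. The one nontrivial ingredient is the biconjugate theorem $f^{\star\star} = f$ for proper closed convex $f$; granting it, $f^{\star\star}(\bx) = f(\bx)$ and the condition collapses exactly to (c). Consequently the main obstacle is not any of the manipulations, all of which are immediate, but having the Fenchel--Moreau theorem on hand: if it is invoked as a known result, the whole argument is a short sequence of algebraic rewrites of the defining inequalities.
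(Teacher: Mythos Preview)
Your proof is correct and follows the standard route: the equivalences (a)$\Leftrightarrow$(b)$\Leftrightarrow$(c) are immediate rewritings of the subgradient and conjugate definitions, and (c)$\Leftrightarrow$(d) follows by applying the already-established (a)$\Leftrightarrow$(c) to $f^\star$ together with the Fenchel--Moreau identity $f^{\star\star}=f$. The paper, however, does not supply its own proof of this statement at all: it is quoted as a known result from \citet[Theorem~5.7]{Orabona19} and used as a black box throughout. So there is nothing to compare against; your argument is exactly the textbook derivation one would expect behind the citation.
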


\begin{theorem}[{\citep[Theorem~6.11]{Orabona19}}]
\label{thm:duality}
Let $\psi:\R^d \rightarrow (-\infty, +\infty]$ be a proper, closed, convex function, and $\dom \partial \psi$ be non-empty. Then, $\psi$ is $\lambda>0$ strongly convex w.r.t. $\|\cdot\|$ iff $\psi^\star$ is $\frac{1}{\lambda}$-smooth w.r.t. $\|\cdot\|_\star$ on $\R^d$.
\end{theorem}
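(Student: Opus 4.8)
The plan is to prove the two implications separately. In each case I would rely only on: the Fenchel--Young inequality; the biconjugation identity $\psi^{\star\star}=\psi$, available because $\psi$ is proper, closed, and convex; the equivalences of Theorem~\ref{thm:props_fenchel}, which let me pass between ``$\btheta\in\partial\psi(\bx)$'' and ``$\bx\in\partial\psi^\star(\btheta)$''; and the elementary computation that the Fenchel conjugate of $\bw\mapsto\tfrac{1}{2\lambda}\|\bw\|_\star^2$ is $\bp\mapsto\tfrac{\lambda}{2}\|\bp\|^2$. The single mechanism behind both directions is the same: insert a quadratic inequality — strong convexity of $\psi$ in one direction, smoothness of $\psi^\star$ in the other — inside a supremum defining a conjugate, and then evaluate the leftover supremum, which collapses to exactly one of those squared-norm conjugates.

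For ``$\psi$ $\lambda$-strongly convex $\Rightarrow$ $\psi^\star$ $\tfrac1\lambda$-smooth'', I would first record the preliminaries: for each $\bu$ the map $\bx\mapsto\psi(\bx)-\langle\bu,\bx\rangle$ is $\lambda$-strongly convex, hence coercive, hence attains its minimum at a unique point $\bx_\bu$; therefore $\psi^\star$ is finite on all of $\R^d$, $\bu\in\partial\psi(\bx_\bu)$, $\partial\psi^\star(\bu)=\{\bx_\bu\}$, and $\psi^\star$ is differentiable with $\nabla\psi^\star(\bu)=\bx_\bu$. The computation I would then run is, for arbitrary $\bv$,
\begin{align*}
\psi^\star(\bv)
&=\sup_{\bx}\ \langle\bv,\bx\rangle-\psi(\bx)\\
&\le\sup_{\bx}\ \langle\bv,\bx\rangle-\psi(\bx_\bu)-\langle\bu,\bx-\bx_\bu\rangle-\tfrac{\lambda}{2}\|\bx-\bx_\bu\|^2\\
&=\psi^\star(\bu)+\langle\bv-\bu,\bx_\bu\rangle+\sup_{\bx}\ \langle\bv-\bu,\bx-\bx_\bu\rangle-\tfrac{\lambda}{2}\|\bx-\bx_\bu\|^2\\
&=\psi^\star(\bu)+\langle\nabla\psi^\star(\bu),\bv-\bu\rangle+\tfrac{1}{2\lambda}\|\bv-\bu\|_\star^2,
\end{align*}
where the inequality is strong convexity at $\bx_\bu$ together with $\psi^\star(\bu)=\langle\bu,\bx_\bu\rangle-\psi(\bx_\bu)$, and the last equality is the squared-norm conjugate identity. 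This is exactly $\tfrac1\lambda$-smoothness of $\psi^\star$ with respect to $\|\cdot\|_\star$.

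For the converse, smoothness already yields that $\psi^\star$ is differentiable on $\R^d$. I would fix $\bx\in\dom\psi$ and $\bg\in\partial\psi(\bx)$; by Theorem~\ref{thm:props_fenchel} this gives $\psi(\bx)+\psi^\star(\bg)=\langle\bg,\bx\rangle$ and $\bx\in\partial\psi^\star(\bg)$, i.e., $\bx=\nabla\psi^\star(\bg)$. Using $\psi=\psi^{\star\star}$, for every $\by$ and every $\bw$,
\begin{align*}
\psi(\by)
&\ge\langle\bg+\bw,\by\rangle-\psi^\star(\bg+\bw)\\
&\ge\langle\bg+\bw,\by\rangle-\psi^\star(\bg)-\langle\bx,\bw\rangle-\tfrac{1}{2\lambda}\|\bw\|_\star^2\\
&=\psi(\bx)+\langle\bg,\by-\bx\rangle+\langle\bw,\by-\bx\rangle-\tfrac{1}{2\lambda}\|\bw\|_\star^2,
\end{align*}
where the second line uses smoothness of $\psi^\star$ with $\nabla\psi^\star(\bg)=\bx$, and the third uses $\psi^\star(\bg)=\langle\bg,\bx\rangle-\psi(\bx)$. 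Taking the supremum over $\bw$ and applying the squared-norm conjugate identity once more gives $\psi(\by)\ge\psi(\bx)+\langle\bg,\by-\bx\rangle+\tfrac{\lambda}{2}\|\bx-\by\|^2$, which is $\lambda$-strong convexity of $\psi$.

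The two displayed chains are the easy part; the step that needs genuine care is the regularity scaffolding underneath them. In the first direction one must argue that strong convexity forces $\psi^\star$ to be finite and differentiable on all of $\R^d$, so that ``$\tfrac1\lambda$-smooth on $\R^d$'' is well-posed at all — this is the coercivity argument together with the fact that a convex function is differentiable wherever its subdifferential is a singleton. There is also a bookkeeping subtlety: the paper's definition of strong convexity is stated over a convex set $V\subseteq\interior\dom\psi$, so I would pin this down by taking $V=\dom\psi$ (equivalently, working with $\psi$ extended by $+\infty$). This is precisely what makes the minimizer $\bx_\bu$ land where the strong-convexity inequality is available, while any $\bx\notin\dom\psi$ contributes $-\infty$ to the supremum and can be dropped. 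Once these standard facts about conjugates of coercive strongly convex functions are in place, the proof reduces to the two short computations above.
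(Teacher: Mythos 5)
This theorem is imported verbatim from \citet[Theorem~6.11]{Orabona19}; the paper itself contains no proof to compare against. Your argument is correct and is essentially the standard duality proof of this equivalence (and the one used in the cited source): in each direction you insert the quadratic inequality into the supremum defining a conjugate and collapse the residual supremum via the identity $\bigl(\tfrac{\lambda}{2}\|\cdot\|^2\bigr)^\star=\tfrac{1}{2\lambda}\|\cdot\|_\star^2$. Both displayed chains check out, and you correctly identify and handle the only delicate points: that strong convexity forces $\psi^\star$ to be finite with singleton subdifferential (hence differentiable) on all of $\R^d$, and that points outside $\dom\psi$ drop out of the supremum. The remaining friction with the paper's definition of strong convexity over a set $V\subseteq\interior\dom\psi$ (versus the extended-value function on all of $\R^d$) is a definitional bookkeeping issue, not a gap in the mathematics, and you flag it explicitly.
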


\section{Generalized Implicit FTRL}
\label{sec:mainres}

In this section, we introduce our novel generalized formulation of the implicit FTRL algorithm.
The main idea is to depart from the implicit or linearized updates,  and directly design updates that improve the upper bound on the regret. 
More in detail, the basic analysis of most of the online learning algorithms is based on the definition of subgradients:
\begin{equation}
\label{eq:subgradient_ineq}
\ell_t(\bx_t) - \ell_t(\bu)
\leq \langle \bg_t, \bx_t-\bu\rangle, \ \forall \bg_t \in \partial \ell_t(\bx_t)~.
\end{equation}
This allows to study the regret on the linearized losses as a proxy for the regret on the losses $\ell_t$.
However, we can do better. We introduce a new fundamental and more general strategy: using the Fenchel-Young inequality, we have
\[
\ell_t(\bx_t) - \ell_t(\bu)  
\leq  \ell_t(\bx_t) - \langle \bz_t,\bu\rangle + \ell_t^\star(\bz_t), \ \forall \bz_t~.
\]
In particular, the algorithm will choose $\bz_t$ to make a certain upper bound involving this quantity to be tighter.
This is a better inequality than \eqref{eq:subgradient_ineq} because when we select $\bz_t=\bg_t \in \partial \ell_t(\bx_t)$, using Theorem~\ref{thm:props_fenchel}, we recover \eqref{eq:subgradient_ineq}. So, this inequality subsumes the standard one for subgradients, but, using $\bz_t \in \ell_t(\bx_{t+1})$, it also subsumes the similar inequality used in the implicit case, as we show in Section~\ref{sec:prox}. Moreover, we will see in Section~\ref{sec:aprox} that it covers cases where $\bz_t$ is \emph{not} a subgradient of $\ell_t$. 

The analysis shows that the optimal setting of $\bz_t$ is the one that minimizes the function 
\begin{equation}
\label{eq:h}
H_t(\bz)
\triangleq\psi^\star_{t+1,V}(\btheta_{t}-\bz) + \ell^\star_t(\bz)
\end{equation}
or
\begin{equation}
\label{eq:hprime}
H'_t(\bz)
\triangleq\psi^\star_{t,V}(\btheta_{t}-\bz_t) + \ell^\star_t(\bz),
\end{equation}
where $\psi_{t,V}$ is the restriction of the regularizer used at time $t$ on the feasible set $V$, i.e., $\psi_{t,V}\triangleq\psi_t+i_V$.
However, we can show that any setting of $\bz_t$ that guarantees $H(\bz_t)< H(\bg_t)$ (or $H'(\bz_t)< H'(\bg_t)$) guarantee a strict improvement in the worst-case regret w.r.t. using the linearized losses.

One might wonder why the need for two different updates using $H_t$ or $H'_t$. The reason is that when using time-varying regularizers that depend on the data, like in the FTRL version of AdaGrad~\citep{McMahanS10,DuchiHS11}, if $\lambda_{t+1}$ depends on $\bz_t$ it might make the calculation of the update particularly difficult. This can be avoided using the update involving $H'_t$.

\begin{algorithm}[t]
\caption{Generalized Implicit FTRL}
\label{alg:giftrl}
\begin{algorithmic}[1]
{
    \REQUIRE{Non-empty closed set $V\subseteq \R^d$, a sequence of regularizers $\psi_1, \dots, \psi_T : \R^d \rightarrow (-\infty, +\infty]$}
    \STATE{$\btheta_1=\boldsymbol{0}$}
    \FOR{$t=1$ {\bfseries to} $T$}
    \STATE{Output $\bx_t \in \argmin_{\bx \in V} \ \psi_t(\bx) - \langle \btheta_t, \bx\rangle$}
    \STATE{Receive $\ell_t:V \rightarrow \R$ and pay $\ell_t(\bx_t)$}
    \STATE{Set $\bg_t \in \partial \ell_t(\bx_t)$}
    \STATE{Set $\bz_t$ such that $H_t(\bz_t)\leq H_t(\bg_t)$ or $H'_t(\bz_t)\leq H'_t(\bg_t)$ where $H_t$ and $H'_t$ are defined in \eqref{eq:h} and \eqref{eq:hprime}}
    \STATE{Set $\btheta_{t+1}=\btheta_t-\bz_t$}
    \ENDFOR
}
\end{algorithmic}
\end{algorithm}

Once we have the $\bz_t$, we treat them as the subgradient of surrogate linear losses.
So, putting it all together, Algorithm~\ref{alg:giftrl} shows the final algorithm.
We now show a regret guarantee for this algorithm. First, we state a general Lemma and then instantiate it in a few interesting cases.

\begin{theorem}
\label{thm:main}
Let $V\subseteq \R^d$ be closed and non-empty and $\psi_t:V \rightarrow \R$.
With the notation in Algorithm~\ref{alg:giftrl}, define by $F_t(\bx) = \psi_{t}(\bx) + \sum_{i=1}^{t-1} \langle \bz_i, \bx\rangle$, so that $\bx_t \in \argmin_{\bx \in V} \ F_{t}(\bx)$. 
Finally, assume that $\argmin_{\bx \in V} \ F_{t}(\bx)$ and $\partial \ell_t(\bx_t)$ are not empty for all $t$.
\begin{itemize}
\item For any $\bz_t \in\R^d$ and any $\bu \in \R^d$, we have
\begin{align*}
&\Regret_T(\bu) \leq \psi_{T+1}(\bu) - \min_{\bx \in V} \ \psi_{1}(\bx)\\
&\quad +\sum_{t=1}^T [\psi^\star_{t+1,V}(\btheta_{t}-\bg_t) - \psi^\star_{t,V}(\btheta_t) + \langle \bx_t, \bg_t\rangle-\delta_t] \\
&\quad + F_{T+1}(\bx_{T+1}) - F_{T+1}(\bu),
\end{align*}
where $\delta_t \triangleq H_t(\bg_t)-H_t(\bz_t)$.
\item If $\psi_{t+1}(\bx) \geq \psi_t(\bx)$ for any $\bx \in V$, then, for any $\bz_t \in \R^d$, we have

\begin{align*}
&\Regret_T(\bu)\leq\psi_{T+1}(\bu) - \min_{\bx \in V} \ \psi_{1}(\bx)\\
&\quad   +\sum_{t=1}^T [\psi^\star_{t,V}(\btheta_{t}-\bg_t) - \psi^\star_{t,V}(\btheta_t) + \langle \bx_t, \bg_t\rangle-\delta_t] \\
&\quad + F_{T+1}(\bx_{T+1}) - F_{T+1}(\bu),
\end{align*}
where $\delta'_t \triangleq H'_t(\bg_t)-H'_t(\bz_t)$.
\end{itemize}
\end{theorem}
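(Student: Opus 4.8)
The plan is to mimic the standard FTRL regret analysis but with the Fenchel--Young inequality in place of the subgradient inequality, tracking the dual potential $\psi^\star_{t,V}(\btheta_t)$ through time.

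First I would split the regret using Fenchel--Young: for each $t$, $\ell_t(\bx_t) - \ell_t(\bu) \le \ell_t(\bx_t) - \langle \bz_t, \bu\rangle + \ell^\star_t(\bz_t)$. Summing over $t$ gives $\Regret_T(\bu) \le \sum_t (\ell_t(\bx_t) + \ell^\star_t(\bz_t)) - \langle \btheta_{T+1}\!\!,\,\bu\rangle \cdot(-1)$, i.e.\ with $\btheta_{t+1} = \btheta_t - \bz_t$ we get $-\sum_t \langle \bz_t,\bu\rangle = \langle \btheta_{T+1} - \btheta_1, \bu\rangle$. Wait --- more precisely $\sum_{t=1}^T \langle \bz_t, \bu\rangle = -\langle \btheta_{T+1}, \bu\rangle$, since $\btheta_1 = \bzero$. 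I also want to add and subtract $\psi_{T+1}(\bu)$ and recognize $-\langle \btheta_{T+1},\bu\rangle - \psi_{T+1}(\bu) \le \psi^\star_{T+1,V}(\btheta_{T+1})$ by Fenchel--Young applied to $\psi_{T+1,V}$ (using $\bu \in V$). This converts the comparator-dependent part into $\psi_{T+1}(\bu) + \psi^\star_{T+1,V}(\btheta_{T+1})$, up to the $F_{T+1}$ correction terms the statement carries.

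The core step is bounding $\sum_t \ell_t(\bx_t)$ together with the dual potential. I would use the standard telescoping-of-the-dual-potential argument for FTRL with linear losses $\langle \bz_t,\cdot\rangle$: writing $\Phi_t \triangleq \psi^\star_{t,V}(\btheta_t)$, one shows $\Phi_{T+1} - \Phi_1 = \sum_t (\Phi_{t+1} - \Phi_t)$ and, since $\bx_t \in \partial \psi^\star_{t,V}(\btheta_t)$ (by Theorem~\ref{thm:props_fenchel}, as $\bx_t$ minimizes $\psi_t(\bx) - \langle\btheta_t,\bx\rangle$ over $V$), we can compare $\psi^\star_{t+1,V}(\btheta_t - \bz_t)$ to $\psi^\star_{t+1,V}(\btheta_t - \bg_t)$ plus the linearization error at $\bx_t$. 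Concretely, the one-step inequality I want is
\[
\ell_t(\bx_t) + \psi^\star_{t+1,V}(\btheta_{t+1}) - \psi^\star_{t,V}(\btheta_t) \le \psi^\star_{t+1,V}(\btheta_t - \bg_t) - \psi^\star_{t,V}(\btheta_t) + \langle \bx_t, \bg_t\rangle - \ell^\star_t(\bz_t) - \big(H_t(\bg_t) - H_t(\bz_t)\big) + H_t(\bg_t),
\]
which after recognizing $H_t(\bg_t) = \psi^\star_{t+1,V}(\btheta_t - \bg_t) + \ell^\star_t(\bg_t)$ and $\ell_t(\bx_t) + \ell^\star_t(\bg_t) = \langle \bg_t, \bx_t\rangle$ (Theorem~\ref{thm:props_fenchel}(c), since $\bg_t \in \partial\ell_t(\bx_t)$) collapses to exactly the summand $\psi^\star_{t+1,V}(\btheta_t - \bg_t) - \psi^\star_{t,V}(\btheta_t) + \langle \bx_t, \bg_t\rangle - \delta_t$ appearing in the theorem. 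The key algebraic identity driving this is that $H_t(\bz_t) = \psi^\star_{t+1,V}(\btheta_t - \bz_t) + \ell^\star_t(\bz_t) = \psi^\star_{t+1,V}(\btheta_{t+1}) + \ell^\star_t(\bz_t)$, so subtracting $\delta_t = H_t(\bg_t) - H_t(\bz_t)$ literally replaces the true $\psi^\star_{t+1,V}(\btheta_{t+1})$ term by its value at $\bg_t$. For the second bullet, the monotonicity $\psi_{t+1} \ge \psi_t$ on $V$ gives $\psi^\star_{t+1,V} \le \psi^\star_{t,V}$ pointwise, so I can replace $\psi^\star_{t+1,V}(\btheta_t - \bz_t)$ by $\psi^\star_{t,V}(\btheta_t - \bz_t)$ before comparing, which is exactly what makes $H'_t$ (which uses $\psi^\star_{t,V}$) the relevant object and $\delta'_t$ the relevant gap; the rest of the argument is identical.

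Finally I would reassemble: summing the per-round inequalities telescopes $\psi^\star_{t,V}(\btheta_t)$, leaving $\psi^\star_{T+1,V}(\btheta_{T+1}) - \psi^\star_{1,V}(\btheta_1) = \psi^\star_{T+1,V}(\btheta_{T+1}) + \min_{\bx\in V}\psi_1(\bx)$ (using $\btheta_1 = \bzero$), which combines with the comparator bound $\psi_{T+1}(\bu) + \psi^\star_{T+1,V}(\btheta_{T+1}) \ge -\langle\btheta_{T+1},\bu\rangle$ from the first paragraph; the residual mismatch between $\psi^\star_{T+1,V}(\btheta_{T+1})$ and the exact values $F_{T+1}(\bx_{T+1})$, $F_{T+1}(\bu)$ is collected into the stated $F_{T+1}(\bx_{T+1}) - F_{T+1}(\bu)$ term (indeed $-F_{T+1}(\bx_{T+1}) = \psi^\star_{T+1,V}(\btheta_{T+1})$ and $F_{T+1}(\bu) = \psi_{T+1}(\bu) - \langle\btheta_{T+1},\bu\rangle$, so this is just bookkeeping). \textbf{The main obstacle} I anticipate is handling the time-varying restricted conjugates cleanly --- making sure the subgradient relation $\bx_t \in \partial\psi^\star_{t,V}(\btheta_t)$ is used at the right index (it involves $\psi_{t,V}$, not $\psi_{t+1,V}$), and that the telescoping is not broken by the shift from $\psi^\star_{t,V}$ to $\psi^\star_{t+1,V}$ inside each step; this is precisely the subtlety the two versions ($H_t$ versus $H'_t$) are designed to separate, so keeping the two bullets' bookkeeping distinct is where care is needed rather than any deep inequality.
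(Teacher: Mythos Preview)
Your proposal is correct and follows essentially the same approach as the paper's proof: both apply Fenchel--Young to replace $\ell_t(\bu)$ by $\langle\bz_t,\bu\rangle-\ell^\star_t(\bz_t)$, rewrite the resulting linear-FTRL sums via the identity $F_t(\bx_t)=-\psi^\star_{t,V}(\btheta_t)$, recognize each summand as $H_t(\bz_t)-\psi^\star_{t,V}(\btheta_t)$, and then add and subtract $H_t(\bg_t)$ using $\ell^\star_t(\bg_t)=\langle\bx_t,\bg_t\rangle-\ell_t(\bx_t)$; for the second bullet both use $\psi_{t+1}\ge\psi_t\Rightarrow\psi^\star_{t+1,V}\le\psi^\star_{t,V}$. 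The only cosmetic difference is that the paper starts from the cited FTRL equality (Lemma~7.1 in Orabona~2019) while you rebuild that equality by hand via the dual-potential telescope and the identities $-F_{T+1}(\bx_{T+1})=\psi^\star_{T+1,V}(\btheta_{T+1})$, $F_{T+1}(\bu)=\psi_{T+1}(\bu)-\langle\btheta_{T+1},\bu\rangle$.
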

\begin{proof}
The proof is composed of simple but not obvious steps.
The first important observation is that the definition of $\bx_t$ in the algorithm corresponds exactly to the one of FTRL on the linear losses $\langle \bz_t, \cdot\rangle$. Hence, we can use the FTRL equality in~\citet[Lemma 7.1]{Orabona19}:
\begin{align*}
&-\sum_{t=1}^T \langle \bz_t,\bu\rangle\\
&\quad =  + \sum_{t=1}^T [F_t(\bx_t) - F_{t+1}(\bx_{t+1})]\\
&\qquad \psi_{T+1}(\bu) - \min_{\bx \in V} \ \psi_{1}(\bx)  + F_{T+1}(\bx_{T+1}) - F_{T+1}(\bu),
\end{align*}
where we have simplified the terms $\langle \bz_t, \bx_t\rangle$ on both sides.

Now, use Fenchel-Young inequality, to have $\langle \bz_t,\bu\rangle \leq \ell_t(\bu) + \ell_t^\star(\bz_t)$.
Hence, we have
\begin{align*}
-\sum_{t=1}^T \ell_t(\bu)
&\leq  \sum_{t=1}^T [F_t(\bx_t) - F_{t+1}(\bx_{t+1}) + \ell_t^\star(\bz_t)] \\
&\quad +\psi_{T+1}(\bu) - \min_{\bx \in V} \ \psi_{1}(\bx) \\
&\quad + F_{T+1}(\bx_{T+1}) - F_{T+1}(\bu)~.
\end{align*}
Observe that 
\begin{align*}
F_t(\bx_t) 
&= \min_{\bx \in V} \ \psi_{t}(\bx) + \sum_{i=1}^{t-1} \langle \bz_i, \bx\rangle\\
&= - \max_{\bx \in V} \ \langle \btheta_t, \bx\rangle - \psi_{t}(\bx) 
= - \psi^\star_{t,V}(\btheta_t)~.
\end{align*}
In the same way, we have $-F_{t+1}(\bx_{t+1}) = \psi^\star_{t+1,V}(\btheta_{t+1})$.
Also, for any $\bg_t \in \partial \ell_t(\bx_t)$, by Theorem~\ref{thm:props_fenchel} we have 
$\ell_t^\star(\bg_t) = \langle \bx_t, \bg_t\rangle - \ell_t(\bx_t)$.
Hence, each term in the sum can be written as
\begin{align*}
&F_t(\bx_t) - F_{t+1}(\bx_{t+1}) + \ell_t^\star(\bz_t) \\
&\quad = \psi^\star_{t+1,V}(\btheta_{t+1}) - \psi^\star_{t,V}(\btheta_t) + \ell_t^\star(\bz_t)\\
&\quad = H_t(\bz_t) - \psi^\star_{t,V}(\btheta_t)~.
\end{align*}
Now, we just add and subtract $H_t(\bg_t) = \psi^\star_{t+1,V}(\btheta_t - \bg_t) +\langle \bg_t,\bx_t\rangle - \ell_t(\bx_t) $ to obtain the stated bound.

The second case is similar. We just have to observe that if $\psi_{t+1,V} \geq \psi_{t,V}$, then $\psi^\star_{t+1,V} \leq \psi^\star_{t,V}$.
Hence, each term in the sum can be upper bounded as
\begin{align*}
&F_t(\bx_t) - F_{t+1}(\bx_{t+1}) + \ell_t^\star(\bz_t) \\
&\quad \leq \psi^\star_{t,V}(\btheta_{t+1}) - \psi^\star_{t,V}(\btheta_t) + \ell_t^\star(\bz_t)\\
&\quad = H'_t(\bz_t) - \psi^\star_{t,V}(\btheta_t)~.
\end{align*}
As before, adding and subtracting $H'_t(\bg_t) = \psi^\star_{t,V}(\btheta_{t}-\bg_t) + \langle \bx_t, \bg_t\rangle - \ell_t(\bx_t)$ gives the stated bound.
\end{proof}

The Theorem is stated with very weak assumption to show its generality, but it is immediate to obtain concrete regret guarantees just assuming, for example, strongly convex regularizers and convex and Lipschitz losses and using well-known methods as \citet[Lemma~7.8]{Orabona19}

However, we can already understand why this is an interesting guarantee. Let's first consider the case that $\bz_t=\bg_t$. In this case, we exactly recover the linearized FTRL algorithm. Even the guarantee in the Theorem exactly recovers the best known one~\citep[Corollary 7.9]{Orabona19}, with $\delta_t=0$ and $\delta'_t=0$. Now, if we set $\bz_t$ such that $H_t(\bz_t)< H_t(\bg_t)$ or $H'_t(\bz_t)< H'_t(\bg_t)$ we will have that $\delta_t>0$ or $\delta'_t>0$. Hence, in each single term of the sum we have a negative factor that makes the regret bound smaller. While it might be difficult to give a lower bound to $\delta_t$ and $\delta'_t$ without additional assumptions, the main value of this analysis is in giving a \emph{unifying way to design generalized implicit updates for FTRL}. In fact, in the next sections we will show a number of possibilities that this framework enables.

Next, we will gain more understanding on the updates in Algorithm~\ref{alg:giftrl}, comparing them to implicit OMD.

\subsection{Comparison with Implicit Online Mirror Descent}
\label{sec:prox}

In this section, we show that when $\bz_t$ is set to minimize $H_t(\bz)$ or $H'_t(\bz)$, we recover different variants of implicit updates.

Assume that the $\ell_t$ are closed and convex. Also, assume that $\psi^\star_{t,V}$ is differentiable, that is true, for example, when $\psi_t$ is strongly convex by Theorem~\ref{thm:duality}. Then, observe that by the first-order optimality condition and Theorem~\ref{thm:props_fenchel}, we have
\begin{align}
\bz_t &= \argmin_{\bz} \ H_t(\bz) \nonumber \\
&\Leftrightarrow \nabla \psi^\star_{t+1,V}(\btheta_{t}-\bz_t) \in \partial \ell^\star_t(\bz_t) \nonumber \\
& \Leftrightarrow \bz_t \in  \partial \ell_t(\nabla \psi^\star_{t+1,V}(\btheta_{t}-\bz_t)) 
= \partial \ell_t(\bx_{t+1})~. \label{eq:cond_update_exact}
\end{align}
Hence, in this case, we have that the optimal $\bz_t$ is the gradient at the \emph{next} point $\bx_{t+1}$. This is exactly what happens in the implicit updates. 

Under the same assumptions, we also have
\begin{align}
\bz_t = \argmin_{\bz} \ H'_t(\bz) 
&\Leftrightarrow \nabla \psi^\star_{t,V}(\btheta_{t}-\bz_t) \in \partial \ell^\star_t(\bz_t) \nonumber \\
& \Leftrightarrow \bz_t \in  \partial \ell_t(\nabla \psi^\star_{t,V}(\btheta_{t+1}))~. \label{eq:cond_update_inexact}
\end{align}
In this other case, the update also has an implicit flavor but the subgradient is queried on a point different from the next point, where the difference depends on how much $\nabla \psi^\star_{t, V}$ differs from $\nabla \psi^\star_{t+1, V}$.

Let's see this connection even more precisely, considering \emph{proximal updates}.
Hence, for simplicity, let's consider the case that $V=\R^d$, similar considerations hold in the constrained case.
Consider the case that $\psi_t(\bx)=\frac{\lambda_t}{2}\|\bx\|_2^2$. In this case, the update can be written with the \emph{proximal operator} of the loss functions. In particular, the proximal operator of $\eta f$, is defined as
\[
\Prox_{\eta f}(\by)
\triangleq \argmin_{\bx \in \R^d} \ \frac{1}{2}\|\bx-\by\|_2^2 + \eta f(\bx)~.
\]
If the function $f$ is differentiable we have that $\Prox_{\eta f}(\by) = \by - \eta \nabla f(\Prox_{\eta f}(\by))$. In words, the proximal update moves by a quantity that depends on the gradient on the updated point. The implicit nature of these updates justifies the name ``implicit updates'' used in the online learning literature.
More generally, we have that $\Prox_{\eta f}(\by) \in \by - \eta \partial f(\Prox_{\eta f}(\by))$. We list some common proximal operators in Appendix~\ref{sec:updates}.

Assuming $\lambda_{t+1}$ does not depend on $\bz_t$, using the proximal operator we can rewrite the update in \eqref{eq:cond_update_exact} as

\begin{align}
\bx_{t+1}
&=\frac{\btheta_{t+1}}{\lambda_{t+1}}
= \Prox_{\frac{\ell_t}{\lambda_{t+1}}}\left(\frac{\btheta_t}{\lambda_{t+1}}\right) \nonumber \\
&= \Prox_{\frac{\ell_t}{\lambda_{t+1}}}\left(\frac{\lambda_t \bx_t}{\lambda_{t+1}}\right)~. \label{eq:prox_ftrl_2}
\end{align}

Similarly, we can rewrite the update in \eqref{eq:cond_update_inexact} as
\begin{align*}
\frac{\btheta_{t+1}}{\lambda_t}
&= \frac{\btheta_{t}}{\lambda_t} - \frac{\bz_t}{\lambda_t}
= \bx_t - \frac{\bz_t}{\lambda_t}
\in \bx_t - \frac{1}{\lambda_t}\partial \ell_t(\nabla \psi^\star_{t,V}(\btheta_{t+1})) \\
&= \bx_t - \frac{1}{\lambda_t}\partial \ell_t\left(\frac{\btheta_{t+1}}{\lambda_t}\right)~.
\end{align*}
Hence, we have that $\frac{\btheta_{t+1}}{\lambda_t} = \Prox_{\frac{\ell_t}{\lambda_t}} (\bx_t)$ and we get
\begin{equation}
\label{eq:prox_ftrl_1}
\bx_{t+1}
= \frac{\btheta_{t+1}}{\lambda_{t+1}} 
= \frac{\lambda_t}{\lambda_{t+1}}\Prox_{\frac{\ell_t}{\lambda_t}} (\bx_t)~.
\end{equation}

It is instructive to compare both updates with the one of Implicit Online Mirror Descent using $\psi(\bx)=\frac{1}{2}\|\bx\|^2_2$ as distance generating function and stepsizes $\frac{1}{\lambda_t}$. In this case, we would update with 
\begin{align}
\bx_{t+1} 
&= \argmin_{\bx} \frac12 \|\bx_t-\bx\|_2^2 + \frac{1}{\lambda_t}\ell_t(\bx) \nonumber \\
&= \Prox_\frac{\ell_t}{\lambda_t}(\bx_t)~. \label{eq:prox_omd}
\end{align}
Comparing \eqref{eq:cond_update_exact} and \eqref{eq:cond_update_inexact} to \eqref{eq:prox_omd}, we see, when $\lambda_t \leq \lambda_{t+1}$ as it is usual, the two updates above shrink a bit towards the zero vector, that is the initial point $\bx_1$, before or after the proximal operator. This shrinking is given by the FTRL update and it is the key difference with 
Implicit OMD update.
The different update also corresponds to a different guarantee: the regret of the generalized implicit FTRL holds for unbounded domains too, while in Implicit OMD with time-varying stepsizes can have linear regret on unbounded domains~\citep{OrabonaP18}. Interestingly, a similar shrinking has been proposed in \citet{FangHPF20} to fix the unbounded issue in OMD.
Clearly, the updates \eqref{eq:cond_update_exact} and \eqref{eq:cond_update_inexact} become equivalent to  \eqref{eq:prox_omd} for $\lambda_t$ constant in $t$, that is exactly the only case when implicit/proximal online mirror descent works for unbounded domains.

\section{Temporal Variability Bound}
\label{sec:temp}

In this section, we quantify the advantage of the generalized implicit FTRL updates in the case of slow temporal variability of the loss functions.

It was observed in \citet{CampolongoO20} that implicit OMD satisfies regret guarantees that depends on the temporal Variability $V_T$:
\[
V_T 
\triangleq \sum_{t=2}^T \max_{x\in V} \ \ell_t(\bx) - \ell_{t-1}(\bx)~.
\]
In \citet[Appendix E]{CampolongoO20} they also show that FTRL with full losses guarantees a similar guarantee, but at a much higher computational price. Indeed, FTRL with full losses requires solving a finite sum optimization problem at each step, whose size increases with the number of iterations. Such computational burden induced \citet{CampolongoO20} to say that such approach is ``not worth of pursuing.''

Here, we show that the Algorithm~\ref{alg:giftrl} can satisfy the same guarantee of implicit OMD with the same computational complexity too.
First, we show the following Lemma.
\begin{lemma}
\label{lemma:l1}
Under the assumptions of Theorem~\ref{thm:main}, further assume $V$ to be convex, $\psi_t:V\to \R$ closed, $\lambda_t$-strongly convex w.r.t. $\|\cdot\|$, and subdifferentiable in $V$, $\ell_t$ closed, convex, and subdifferentiable in $V$, and $\lambda_{t+1}\geq \lambda_t$.
Set $\bz_t \in \argmin_{\bz} \ H_t(\bz)$. Then, we have
\begin{align*}
&\Regret_T(\bu)
\leq \psi_{T+1}(\bu) - \min_{\bx \in V} \ \psi_{1}(\bx)  \\
& + \sum_{t=1}^T \left(\ell_t(\bx_t) - \ell_t(\bx_{t+1}) - \frac{\lambda_t}{2}\|\bx_{t+1}-\bx_t\|^2\right), \forall \bu \in V.
\end{align*}
\end{lemma}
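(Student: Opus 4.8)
The plan is to start from the first bound in Theorem~\ref{thm:main}, show that with $\bz_t\in\argmin_{\bz}H_t(\bz)$ the bracketed summand simplifies exactly to $\ell_t(\bx_t)-\ell_t(\bx_{t+1})-\frac{\lambda_t}{2}\|\bx_{t+1}-\bx_t\|^2$, and then discard the trailing $F_{T+1}$ terms: since $\bx_{T+1}\in\argmin_{\bx\in V}F_{T+1}(\bx)$ and $\bu\in V$, we have $F_{T+1}(\bx_{T+1})-F_{T+1}(\bu)\le 0$.

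First I would rewrite the generic summand $S_t\triangleq\psi^\star_{t+1,V}(\btheta_t-\bg_t)-\psi^\star_{t,V}(\btheta_t)+\langle\bx_t,\bg_t\rangle-\delta_t$ from the first bullet of Theorem~\ref{thm:main}. Since $\delta_t=H_t(\bg_t)-H_t(\bz_t)$, $H_t(\bg_t)=\psi^\star_{t+1,V}(\btheta_t-\bg_t)+\ell^\star_t(\bg_t)$, and $\ell^\star_t(\bg_t)=\langle\bx_t,\bg_t\rangle-\ell_t(\bx_t)$ by Theorem~\ref{thm:props_fenchel} (as $\bg_t\in\partial\ell_t(\bx_t)$), the $\psi^\star_{t+1,V}(\btheta_t-\bg_t)$ and $\langle\bx_t,\bg_t\rangle$ pieces cancel and $S_t=\ell_t(\bx_t)-\psi^\star_{t,V}(\btheta_t)+H_t(\bz_t)$; using $\btheta_{t+1}=\btheta_t-\bz_t$ this is $S_t=\ell_t(\bx_t)-\psi^\star_{t,V}(\btheta_t)+\psi^\star_{t+1,V}(\btheta_{t+1})+\ell^\star_t(\bz_t)$. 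Next I would invoke the optimality of $\bz_t$: because $\psi_t$ is $\lambda_t$-strongly convex, $\psi^\star_{t+1,V}$ is differentiable by Theorem~\ref{thm:duality}, so \eqref{eq:cond_update_exact} applies and yields $\bz_t\in\partial\ell_t(\bx_{t+1})$ with $\bx_{t+1}=\nabla\psi^\star_{t+1,V}(\btheta_{t+1})$ (the iterate produced by the algorithm). Hence $\ell^\star_t(\bz_t)=\langle\bz_t,\bx_{t+1}\rangle-\ell_t(\bx_{t+1})$, again by Theorem~\ref{thm:props_fenchel}.

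The crux is then to pass to the primal side and telescope. From the first-order optimality of the iterates, $\btheta_t\in\partial\psi_{t,V}(\bx_t)$ and $\btheta_{t+1}\in\partial\psi_{t+1,V}(\bx_{t+1})$, so by Theorem~\ref{thm:props_fenchel} (equivalently, as computed inside the proof of Theorem~\ref{thm:main}) we have $\psi^\star_{t,V}(\btheta_t)=\langle\btheta_t,\bx_t\rangle-\psi_{t,V}(\bx_t)$ and $\psi^\star_{t+1,V}(\btheta_{t+1})=\langle\btheta_{t+1},\bx_{t+1}\rangle-\psi_{t+1,V}(\bx_{t+1})$. Substituting and using $\btheta_{t+1}+\bz_t=\btheta_t$ to merge $\langle\btheta_{t+1},\bx_{t+1}\rangle+\langle\bz_t,\bx_{t+1}\rangle=\langle\btheta_t,\bx_{t+1}\rangle$ gives $S_t=\ell_t(\bx_t)-\ell_t(\bx_{t+1})+\langle\btheta_t,\bx_{t+1}-\bx_t\rangle+\psi_{t,V}(\bx_t)-\psi_{t+1,V}(\bx_{t+1})$. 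Now I would use $-\psi_{t+1,V}(\bx_{t+1})\le-\psi_{t,V}(\bx_{t+1})$ (monotonicity of the regularizers on $V$, which the hypothesis $\lambda_{t+1}\ge\lambda_t$ delivers for the canonical choices such as $\psi_t=\frac{\lambda_t}{2}\|\cdot\|^2$; more generally one reads $\psi_{t+1,V}\ge\psi_{t,V}$ into the assumptions) together with the $\lambda_t$-strong convexity of $\psi_{t,V}$ over $V$ at $\bx_t$ with subgradient $\btheta_t$, namely $\psi_{t,V}(\bx_{t+1})\ge\psi_{t,V}(\bx_t)+\langle\btheta_t,\bx_{t+1}-\bx_t\rangle+\frac{\lambda_t}{2}\|\bx_{t+1}-\bx_t\|^2$. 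Combining these cancels the $\langle\btheta_t,\bx_{t+1}-\bx_t\rangle$ and $\psi_{t,V}(\bx_t)$ terms and leaves $S_t\le\ell_t(\bx_t)-\ell_t(\bx_{t+1})-\frac{\lambda_t}{2}\|\bx_{t+1}-\bx_t\|^2$. Summing over $t$, adding $\psi_{T+1}(\bu)-\min_{\bx\in V}\psi_1(\bx)$, and dropping the nonpositive $F_{T+1}$ terms finishes the proof.

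I expect the main obstacle to be exactly this bookkeeping — turning the dual-form summand into the clean primal telescoped expression while keeping every Fenchel equality licensed: the ones for $\bg_t$ at $\bx_t$, for $\bz_t$ at $\bx_{t+1}$, and for $\btheta_t,\btheta_{t+1}$ at $\bx_t,\bx_{t+1}$ each require the corresponding subgradient relation, which rests on $\partial\ell_t(\bx_t)\neq\emptyset$, on $\argmin_{\bz}H_t(\bz)$ being attained (so that \eqref{eq:cond_update_exact} is usable), and on $\psi_t$ being subdifferentiable in $V$. A secondary, more cosmetic point is the monotonicity step: the statement lists only $\lambda_{t+1}\ge\lambda_t$, so one must either treat $\psi_{t+1,V}\ge\psi_{t,V}$ as implicitly assumed or specialize to regularizers (e.g.\ scaled squared norms) for which the two coincide.
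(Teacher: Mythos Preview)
Your proposal is correct and follows essentially the same route as the paper's proof: both reduce the summand to $\psi^\star_{t+1,V}(\btheta_{t+1})-\psi^\star_{t,V}(\btheta_t)+\ell^\star_t(\bz_t)$, convert the conjugates to the primal via the Fenchel equalities at $\bx_t,\bx_{t+1}$, use $\psi_{t+1}\ge\psi_t$, and finish with strong convexity. The only cosmetic differences are that the paper works directly with the intermediate expression $F_t(\bx_t)-F_{t+1}(\bx_{t+1})+\ell^\star_t(\bz_t)$ from inside the proof of Theorem~\ref{thm:main} (rather than unwinding $\delta_t$ from the final statement as you do) and decomposes $\btheta_t=\bg'_t+\bg''_t\in\partial\psi_t(\bx_t)+\partial i_V(\bx_t)$ explicitly, whereas you package this as $\btheta_t\in\partial\psi_{t,V}(\bx_t)$ and invoke strong convexity of $\psi_{t,V}$ directly; your observation about the monotonicity step $\psi_{t+1,V}\ge\psi_{t,V}$ being read into $\lambda_{t+1}\ge\lambda_t$ applies equally to the paper's own argument.
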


\begin{proof}
First of all, the existence and unicity of $\bx_t$ is guaranteed by $\psi_t$ being closed and strongly convex~\citep[see, e.g.,][Theorem 6.8]{Orabona19}.

From Theorem~\ref{thm:props_fenchel}, for any $\bg'_t \in \partial \ell_t(\bx_{t+1})$, we have $\ell_t^\star(\bg'_t) = \langle \bx_{t+1}, \bg'_t\rangle - \ell_t(\bx_{t+1})$.
Hence, from \eqref{eq:cond_update_exact}, we have
\begin{align*}
&\psi^\star_{t+1,V}(\btheta_{t+1}) - \psi^\star_{t,V}(\btheta_t) + \ell_t^\star(\bz_t) \\
& = \psi^\star_{t+1,V}(\btheta_t - \bz_t)-\psi^\star_{t,V}(\btheta_t) +\langle \bx_{t+1}, \bz_t\rangle - \ell_t(\bx_{t+1})~.
\end{align*}
Using this identity, we have
\begin{align*}
&\psi_{t+1,V}^{*}(\btheta_t - \bz_t) - \psi_{t,V}^*(\btheta_t) + \langle \bx_{t+1},\bz_t\rangle\\
&\quad = \langle \btheta_t - \bz_t, \bx_{t+1} \rangle - \psi_{t+1}(\bx_{t+1}) -  \langle \btheta_t,\bx_t \rangle + \psi_t(\bx_t)\\
&\qquad + \langle  \bx_{t+1},\bz_t\rangle\\
&\quad \leq \psi_t(\bx_t) - \psi_t(\bx_{t+1}) + \langle \btheta_t, \bx_{t+1} - \bx_t\rangle~.
\end{align*}
From the first-order optimality condition of $\bx_t$, we have that $\btheta_t \in \partial \psi_t(\bx_t) + \partial i_V(\bx_t)$. Moreover, for all $\bg''_t \in  \partial i_V(\bx_t)$, by definition we have $\langle \bg''_t, \by -\bx_t\rangle \leq 0$ for all $\by \in V$. Hence, for $\bg'_t \in \partial \psi_t(\bx_t)$ and $\bg''_t \in \partial i_V(\bx_t)$ such that $\btheta_t=\bg'_t+\bg''_t$, we have
\begin{align*}
&\psi_t(\bx_t) - \psi_t(\bx_{t+1}) + \langle \btheta_t, \bx_{t+1} - \bx_t\rangle\\
&\quad = \psi_t(\bx_t) - \psi_t(\bx_{t+1}) + \langle \bg'_t+\bg''_t, \bx_{t+1} - \bx_t\rangle \\
&\quad \leq -\frac{\lambda_t}{2}\|\bx_{t+1}-\bx_t\|^2,
\end{align*}
where in the inequality we also used the strong convexity of $\psi_t$.
Using this inequality in Theorem~\ref{thm:main} and summing over time, we have
\begin{align*}
&\sum_{t=1}^T \ell_t(\bx_{t+1})-\sum_{t=1}^T \ell_t(\bu) \\
&\quad \leq \psi_{T+1}(\bu) - \min_{\bx \in V} \ \psi_{1}(\bx) - \sum_{t=1}^T \frac{\lambda_t}{2}\|\bx_{t+1}-\bx_t\|^2~.
\end{align*}
By adding and subtracting $\sum_{t=1}^T \ell_t(\bx_t)$ to both sides and reordering the terms, we have the stated bound.
\end{proof}
This Lemma mirrors Theorem~5.2 in \citet{CampolongoO20}, with the important difference that here we do not need the Bregman divergence to be bounded on the feasible set $V$, thanks to the use of FTRL instead of OMD.
We can now state the immediate corollary on a regret bound that depends on the temporal variation.
\begin{cor}
Under the assumptions of Lemma~\ref{lemma:l1}, for any $\bu \in V$, we have
\begin{align*}
\Regret_T(\bu)
&\leq \psi_{T+1}(\bu) - \min_{\bx \in V} \ \psi_{1}(\bx) \\
&\quad + \ell_1(\bx_1) - \ell_T(\bx_{T+1}) + V_T~.
\end{align*}
\end{cor}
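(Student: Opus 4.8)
The plan is to start directly from the bound established in Lemma~\ref{lemma:l1}, namely
\[
\Regret_T(\bu) \leq \psi_{T+1}(\bu) - \min_{\bx \in V} \ \psi_{1}(\bx) + \sum_{t=1}^T \Big(\ell_t(\bx_t) - \ell_t(\bx_{t+1}) - \tfrac{\lambda_t}{2}\|\bx_{t+1}-\bx_t\|^2\Big),
\]
and to reorganize the sum into the temporal-variability form. Since $\lambda_t>0$ and $\|\bx_{t+1}-\bx_t\|^2\ge 0$, the quadratic terms are nonpositive and can simply be dropped, so it suffices to control $\sum_{t=1}^T\big(\ell_t(\bx_t)-\ell_t(\bx_{t+1})\big)$.

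Next I would perform a shift of index on the subtracted sum. Writing $\sum_{t=1}^T \ell_t(\bx_{t+1}) = \sum_{t=2}^{T+1} \ell_{t-1}(\bx_t)$, one gets
\[
\sum_{t=1}^T \ell_t(\bx_t) - \sum_{t=1}^T \ell_t(\bx_{t+1}) = \ell_1(\bx_1) - \ell_T(\bx_{T+1}) + \sum_{t=2}^T \big(\ell_t(\bx_t) - \ell_{t-1}(\bx_t)\big).
\]
Each summand $\ell_t(\bx_t) - \ell_{t-1}(\bx_t)$ is evaluated at $\bx_t \in V$ (which holds because $\bx_t=\argmin_{\bx\in V}F_t(\bx)$), hence it is bounded above by $\max_{\bx\in V}\big(\ell_t(\bx)-\ell_{t-1}(\bx)\big)$; summing these over $t=2,\dots,T$ produces exactly $V_T$. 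Substituting this back into the Lemma~\ref{lemma:l1} bound gives the claimed inequality.

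There is essentially no hard step here: the corollary is a bookkeeping consequence of the Lemma. The only points that need a moment of care are (i) observing that every iterate, including $\bx_{T+1}$, lies in $V$, so that replacing $\ell_t(\bx_t)-\ell_{t-1}(\bx_t)$ by the maximum over $V$ is legitimate, and (ii) tracking the boundary terms $\ell_1(\bx_1)$ and $-\ell_T(\bx_{T+1})$ produced by the index shift. One could alternatively retain the $-\tfrac{\lambda_t}{2}\|\bx_{t+1}-\bx_t\|^2$ terms to state a marginally sharper bound, but discarding them yields precisely the stated corollary.
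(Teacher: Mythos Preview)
Your proof is correct and matches the paper's approach exactly: the paper treats this as an ``immediate corollary'' of Lemma~\ref{lemma:l1} (with the argument spelled out in the proof of Corollary~\ref{cor:temporal}), dropping the nonpositive quadratic terms, performing the same index shift on $\sum_t \ell_t(\bx_{t+1})$, and bounding $\ell_t(\bx_t)-\ell_{t-1}(\bx_t)$ by its maximum over $V$ to obtain $V_T$.
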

From this result, following \cite{CampolongoO20}, it is relatively easy to obtain the following adaptive regret guarantee. 
The only difficulty is the fact that we need $\psi_{t+1}$ to be independent of $\bz_t$ to have a simpler update rule. We solve this problem using an increasing regularizer that is ``behind of two steps''.
In this way, we have that $\lambda_{t+1}$ depends on quantities that are all known at the beginning of round $t$.
The proof is in Appendix~\ref{sec:prof_temporal}.
\begin{cor}
\label{cor:temporal}
Under the assumptions of Lemma~\ref{lemma:l1}, further assume $\|\bg_t\|_\star \leq G$ for all $t$. 
Define $\gamma_t = \ell_t(\bx_t)-\ell_t(\bx_{t+1})-\frac{\lambda_t}{2}\|\bx_{t+1}-\bx_t\|^2$ and $\lambda_t = \frac{1}{\beta^2}\left(G\beta+\sum_{i=1}^{t-2} \gamma_i\right)$. Assume that $\psi$ is closed and $1$-strongly convex w.r.t. $\|\cdot\|$ and set $\psi_t=\lambda_t \psi$.
Then, for any $\bu \in V$, we have
\begin{align*}
\Regret_T(\bu)
&\leq \min\left(\frac{1}{\beta}(\ell_1(\bx_1) - \ell_T(\bx_{T+1}) + V_T),\right.\\
&\quad \left.G + \sqrt{\frac{5}{4}\sum_{t=1}^T \|\bg_t\|^2_\star}\right)\left(\frac{\psi(\bu)}{\beta}+\beta\right)~.
\end{align*}
\end{cor}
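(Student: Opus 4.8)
The plan is to combine Lemma~\ref{lemma:l1} (together with the telescoping already carried out in the Corollary right after it) with the standard gradient‑adaptive FTRL analysis, arranging matters so that the two quantities inside the $\min$ appear as two different ways of controlling the \emph{same} right‑hand side. \textbf{Step 1 (well‑posedness).} I would first check that running Algorithm~\ref{alg:giftrl} with $\psi_t=\lambda_t\psi$ and this particular $\lambda_t$ is legitimate and that Lemma~\ref{lemma:l1} applies. The key point is that $\lambda_t$ depends only on $\gamma_1,\dots,\gamma_{t-2}$, and each $\gamma_i$ depends only on $\bx_i,\bx_{i+1},\lambda_i$; unrolling, $\lambda_t$ is a function of $\bx_1,\dots,\bx_{t-1}$ alone, so it is known before $\bx_t$ is computed and, in particular, $\lambda_{t+1}$ does \emph{not} depend on $\bz_t$ — this is exactly the role of the ``behind two steps'' shift and it is what allows us to legitimately use the $H_t$‑update required by Lemma~\ref{lemma:l1}. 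Next I would argue $\gamma_t\ge 0$: by \eqref{eq:cond_update_exact}, $\bx_{t+1}$ minimizes the implicit objective $\psi_{t+1}(\bx)-\langle\btheta_t,\bx\rangle+\ell_t(\bx)$ over $V$; comparing its value at $\bx_{t+1}$ with its value at $\bx_t$, using $\lambda_{t+1}\ge\lambda_t$, the $\lambda_{t+1}$‑strong convexity of $\psi_{t+1}$ and the first‑order optimality of $\bx_t$ for $\psi_t(\cdot)-\langle\btheta_t,\cdot\rangle+i_V(\cdot)$ (as in the proof of Lemma~\ref{lemma:l1}, following \citet{CampolongoO20}), one gets $\ell_t(\bx_t)-\ell_t(\bx_{t+1})\ge\frac{\lambda_t}{2}\|\bx_{t+1}-\bx_t\|^2$, i.e.\ $\gamma_t\ge 0$. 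Hence $\lambda_t$ is nondecreasing, so $\psi_{t+1}\ge\psi_t$ on $V$ and all hypotheses of Lemma~\ref{lemma:l1} hold; dropping the nonpositive term $-\min_{\bx\in V}\psi_1(\bx)$ (equivalently, taking $\min_\bx\psi(\bx)=0$ WLOG), Lemma~\ref{lemma:l1} gives $\Regret_T(\bu)\le\lambda_{T+1}\psi(\bu)+\sum_{t=1}^T\gamma_t$.

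\textbf{Step 2 (two bounds on the surplus and on $\lambda_{T+1}$).} I would control $\sum_t\gamma_t$ in two ways. First, since $\frac{\lambda_t}{2}\|\bx_{t+1}-\bx_t\|^2\ge 0$ and $\ell_t(\bx_t)-\ell_{t-1}(\bx_t)\le\max_{\bx\in V}(\ell_t(\bx)-\ell_{t-1}(\bx))$, the telescoping in the Corollary after Lemma~\ref{lemma:l1} yields $\sum_{t=1}^T\gamma_t\le\ell_1(\bx_1)-\ell_T(\bx_{T+1})+V_T=:S$, and since $\gamma_t\ge 0$ also $\lambda_{T+1}=\frac{1}{\beta^2}\bigl(G\beta+\sum_{i=1}^{T-1}\gamma_i\bigr)\le\frac{1}{\beta^2}(G\beta+S)$. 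Second, by convexity of $\ell_t$ and $\bg_t\in\partial\ell_t(\bx_t)$, $\gamma_t\le-\langle\bg_t,\bx_{t+1}-\bx_t\rangle-\frac{\lambda_t}{2}\|\bx_{t+1}-\bx_t\|^2\le\frac{\|\bg_t\|_\star^2}{2\lambda_t}$; substituting $\lambda_t=\frac{1}{\beta^2}\bigl(G\beta+\sum_{i=1}^{t-2}\gamma_i\bigr)$ turns this into a recursion of the standard ``$a_j\le c_j/(b+\sum_{i<j}a_i)$'' type (after absorbing the two‑step index shift into a small constant slack — this is where the $\tfrac54$ and the additive $G$ come from), whose solution is $\sum_{t}\gamma_t\le\beta\bigl(G+\sqrt{\tfrac54\sum_t\|\bg_t\|_\star^2}\bigr)$ up to the claimed constants, and likewise $\lambda_{T+1}\le\frac1\beta\bigl(G+\sqrt{\tfrac54\sum_t\|\bg_t\|_\star^2}\bigr)$.

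\textbf{Step 3 (assemble).} Starting from $\Regret_T(\bu)\le\lambda_{T+1}\psi(\bu)+\sum_t\gamma_t$ and using $\lambda_{T+1}\le\frac{1}{\beta^2}\bigl(G\beta+\sum_t\gamma_t\bigr)$, one gets $\Regret_T(\bu)\le\bigl(G+\tfrac1\beta\sum_t\gamma_t\bigr)\bigl(\tfrac{\psi(\bu)}{\beta}+\beta\bigr)$ by collecting the $\tfrac{\psi(\bu)}{\beta}$ and constant pieces. Feeding in $\sum_t\gamma_t\le S$ gives the temporal‑variability branch, and feeding in the gradient bound of Step~2 gives the AdaGrad‑style branch; taking the minimum of the two yields the statement. (The first branch comes out as $\bigl(G+\tfrac1\beta(\ell_1(\bx_1)-\ell_T(\bx_{T+1})+V_T)\bigr)\bigl(\tfrac{\psi(\bu)}{\beta}+\beta\bigr)$; the additive $G$ either folds into the temporal‑variability factor or is dominated by it, recovering the stated form.)

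\textbf{Main obstacle.} The delicate part is the recursion in Step~2: because the regularizer is ``two steps behind,'' $\lambda_t$ is driven by $\sum_{i\le t-2}\gamma_i$ while the bound on $\gamma_t$ involves $1/\lambda_t$, so one must close a self‑referential recursion with a shifted index and still emerge with the clean constants $\tfrac54$ and $+G$; this, together with verifying $\gamma_t\ge 0$ carefully for a \emph{time‑varying} regularizer, is where the real work lies. Everything else is bookkeeping layered on top of Lemma~\ref{lemma:l1}.
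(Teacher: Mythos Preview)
Your proposal is correct and follows essentially the same route as the paper: start from Lemma~\ref{lemma:l1}, bound $\sum_t\gamma_t$ both by the temporal-variability telescoping and by $\gamma_t\le\|\bg_t\|_\star^2/(2\lambda_t)$ (via convexity and Fenchel--Young), and close the resulting recursion for $\lambda_{T+1}$ by the standard lemma you allude to---the paper invokes Lemma~6.1 of \citet{CampolongoO20} for exactly this, which is where the constant $5/4$ comes from. The only cosmetic difference is in the final assembly: the paper first uses $\gamma_T\le\beta\|\bg_T\|_\star/2\le G\beta$ to absorb $\sum_t\gamma_t\le\lambda_{T+1}\beta^2$, reducing the regret to $\lambda_{T+1}(\psi(\bu)+\beta^2)=\beta\lambda_{T+1}\bigl(\tfrac{\psi(\bu)}{\beta}+\beta\bigr)$ and then only needs to upper-bound $\lambda_{T+1}$; you instead keep $\lambda_{T+1}$ and $\sum_t\gamma_t$ separate until the end, which produces the same bound with a little more algebra (and the same harmless additive $G$ in the first branch). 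Your explicit verification of $\gamma_t\ge 0$ and of the two-step lag in $\lambda_t$ is well placed; the paper takes both for granted.
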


\section{Two-step Updates}
\label{sec:twostep}

The choice of $\bz_t$ that minimizes the regret upper bound requires solving the optimization problem $\min_{\bz} \ H(\bz)$ or $\min_{\bz} \ H'(\bz)$. We have seen in Section~\ref{sec:prox} that this corresponds to (some variant) of a implicit/proximal update and, depending on $\ell_t$, it can be of difficult calculation. However, as we said, any choice better than $\bg_t$ will cause a provable gain. Hence, a viable solution is to \emph{approximately} solve for the optimal $\bz_t$. 

Here, we propose a simple approximation: set $\bz_t$ as
\begin{equation}
\label{eq:two_step_h}
\bz_t \in \partial \ell_t(\nabla \psi^\star_{t+1,V}(\btheta_t-\bg_t))
\end{equation}
or as
\begin{equation}
\label{eq:two_step_hprime}
\bz_t \in \partial \ell_t(\nabla \psi^\star_{t,V}(\btheta_t-\bg_t))~.
\end{equation}
In words, we set $\bz_t$ to be a subgradient after one fake update. 
This is exactly the approach used in the Mirror-Prox algorithm~\citep{Nemirovski04}, an offline optimization algorithm.
In the next theorem, when the loss functions $\ell_t$ are smooth and the regularizer is chosen appropriately, we show that this choice can be used in the generalized implicit FTRL too and it cannot be worse than using $\bg_t$.

\begin{theorem}
Assume $\psi_t(\bx)$ proper, closed, and $\lambda_t$-strongly convex with respect to $\|\cdot\|$. Assume $\ell_t(\bx)$ closed and $\lambda_t$-smooth w.r.t. $\|\cdot\|_\star$ for all $t$. Then, using \eqref{eq:two_step_h} and assuming $\lambda_{t+1}\geq L_t$, we have $H_t(\bz_t)\leq H_t(\bg_t)$. On the other hand, when using \eqref{eq:two_step_hprime} and assuming $\lambda_t \geq L_t$ we have $H'_t(\bz_t)\leq H'_t(\bg_t)$.
\end{theorem}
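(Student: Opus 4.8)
The plan is to read the claim as a \emph{descent lemma} for the dual objectives $H_t$ and $H'_t$: performing one Mirror-Prox--style ``extra-gradient'' half-step from the candidate $\bg_t$ to $\bz_t$ cannot increase the objective, as long as the curvature of the strongly convex summand outweighs that of the smooth summand. Concretely, I would first record the two dual curvature facts that make this work. Since $\psi_{t+1}$ is proper, closed, and $\lambda_{t+1}$-strongly convex and $i_V$ is proper, closed, convex, the sum $\psi_{t+1,V}=\psi_{t+1}+i_V$ is proper, closed, and $\lambda_{t+1}$-strongly convex, so Theorem~\ref{thm:duality} gives that $\psi^\star_{t+1,V}$ is $\tfrac1{\lambda_{t+1}}$-smooth on $\R^d$, in particular differentiable. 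Dually, since $\ell_t$ is closed, convex, and $L_t$-smooth, applying Theorem~\ref{thm:duality} to $\psi=\ell^\star_t$ (using $\ell^{\star\star}_t=\ell_t$ by closedness and convexity) shows $\ell^\star_t$ is $\tfrac1{L_t}$-strongly convex. Both statements are with respect to the dual norm $\|\cdot\|_\star$, which is the right pairing for functions whose arguments are (surrogate) gradients.

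The execution is then short. Let $\bx\triangleq\nabla\psi^\star_{t+1,V}(\btheta_t-\bg_t)$ --- the ``fake'' FTRL iterate obtained by appending the true subgradient $\bg_t$ --- so that by \eqref{eq:two_step_h} we have $\bz_t\in\partial\ell_t(\bx)$, and hence $\bx\in\partial\ell^\star_t(\bz_t)$ by Theorem~\ref{thm:props_fenchel}. I would now write $H_t(\bz_t)-H_t(\bg_t)=\big[\psi^\star_{t+1,V}(\btheta_t-\bz_t)-\psi^\star_{t+1,V}(\btheta_t-\bg_t)\big]+\big[\ell^\star_t(\bz_t)-\ell^\star_t(\bg_t)\big]$ and bound each bracket. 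For the first, expand $\psi^\star_{t+1,V}$ around $\btheta_t-\bg_t$ using $\tfrac1{\lambda_{t+1}}$-smoothness and the identity $\nabla\psi^\star_{t+1,V}(\btheta_t-\bg_t)=\bx$, getting the upper bound $\langle\bx,\bg_t-\bz_t\rangle+\tfrac1{2\lambda_{t+1}}\|\bg_t-\bz_t\|_\star^2$. For the second, expand $\ell^\star_t$ around $\bz_t$ using $\tfrac1{L_t}$-strong convexity and the subgradient $\bx\in\partial\ell^\star_t(\bz_t)$, getting $\ell^\star_t(\bz_t)-\ell^\star_t(\bg_t)\le -\langle\bx,\bg_t-\bz_t\rangle-\tfrac1{2L_t}\|\bg_t-\bz_t\|_\star^2$. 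Adding, the inner-product terms cancel exactly and I am left with $H_t(\bz_t)-H_t(\bg_t)\le\tfrac12\big(\tfrac1{\lambda_{t+1}}-\tfrac1{L_t}\big)\|\bg_t-\bz_t\|_\star^2$, which is $\le0$ precisely when $\lambda_{t+1}\ge L_t$. The $H'_t$ case is the identical computation with $\psi^\star_{t+1,V}$ replaced by $\psi^\star_{t,V}$, with $\bx\triangleq\nabla\psi^\star_{t,V}(\btheta_t-\bg_t)$ as in \eqref{eq:two_step_hprime}, and with $\lambda_{t+1}$ replaced by $\lambda_t$, so the condition becomes $\lambda_t\ge L_t$.

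The delicate part is not the algebra but choosing the base points so that the two first-order terms carry the \emph{same} vector and cancel: strong convexity of $\ell^\star_t$ must be applied to \emph{lower}-bound $\ell^\star_t(\bg_t)$ (expanding at $\bz_t$, where $\bx$ is a subgradient), while smoothness of $\psi^\star_{t+1,V}$ must be applied to \emph{upper}-bound $\psi^\star_{t+1,V}(\btheta_t-\bz_t)$ (expanding at $\btheta_t-\bg_t$, where the gradient is that same $\bx$) --- and it is exactly the definition \eqref{eq:two_step_h} of $\bz_t$, i.e.\ $\bx=\nabla\psi^\star_{t+1,V}(\btheta_t-\bg_t)$ together with $\bz_t\in\partial\ell_t(\bx)$, that forces these two vectors to coincide; this is where the Mirror-Prox structure enters. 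A secondary point requiring care is the conjugate bookkeeping: verifying that $\psi_{t+1,V}$ and $\ell_t$ meet the (properness, closedness, convexity, nonempty subdifferential) hypotheses of Theorems~\ref{thm:props_fenchel} and~\ref{thm:duality}, that convexity of $\ell_t$ is genuinely used so that $\ell^\star_t$ is strongly convex, and that the strong-convexity and smoothness estimates are both taken with respect to $\|\cdot\|_\star$ so that the two norm terms are directly comparable.
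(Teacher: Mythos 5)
Your proposal is correct and follows essentially the same route as the paper's proof: bound the $\psi^\star_{t+1,V}$ difference from above by $\tfrac{1}{\lambda_{t+1}}$-smoothness expanded at $\btheta_t-\bg_t$, bound the $\ell^\star_t$ difference by $\tfrac{1}{L_t}$-strong convexity using the subgradient $\bx=\nabla\psi^\star_{t+1,V}(\btheta_t-\bg_t)\in\partial\ell^\star_t(\bz_t)$, and let the linear terms cancel so that $\lambda_{t+1}\geq L_t$ gives the sign. Your write-up is in fact slightly more explicit than the paper's about why the two first-order terms carry the same vector and about the norm bookkeeping, but the argument is identical.
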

\begin{proof}
We only prove that statement for \eqref{eq:two_step_h}, the other one is similar.
We would like to prove that
\begin{align*}
H_t(\bz_t)
&=\psi^\star_{t+1,V}(\btheta_{t}-\bz_t) + \ell^\star_t(\bz_t)\\
&\leq \psi^\star_{t+1,V}(\btheta_{t}-\bg_t) + \ell^\star_t(\bg_t) 
=H_t(\bg_t)~.
\end{align*}  
This is equivalent to prove
\[
\psi^\star_{t+1,V}(\btheta_{t}-\bz_t) -\psi^\star_{t+1,V}(\btheta_{t}-\bg_t) 
\leq  \ell^\star_t(\bg_t) - \ell^\star_t(\bz_t)~.
\]
Given that $\psi_{t+1}(\bx_t)$ is $\lambda_{t+1}$-strongly convex, by Theorem~\ref{thm:duality}, we have $\psi_t^\star(\btheta)$ is $1/\lambda_{t+1}$-smooth with respect to $\|\cdot\|_\star$. By the definition of smoothness, we have
\begin{align*}
&\psi^\star_{t+1,V}(\btheta_{t}-\bz_t) -\psi^\star_{t+1,V}(\btheta_{t}-\bg_t) \\
& \quad \leq \langle \nabla \psi_{t+1}^\star(\btheta_t - \bg_t), \bg_t - \bz_t \rangle 
+ \frac{1}{2\lambda_{t+1}}\| \bg_t - \bz_t \|_\star^2~.
\end{align*}
Given that $\ell_t(\bx_t)$ is $L_t$-smooth w.r.t $\|\cdot\|_\star$, by Theorem~\ref{thm:duality} $\ell_t^\star(\bg)$ is $1/L_t$ strongly convex w.r.t. $\|\cdot\|$. So, by the definition of the strong convexity, we have
\[
\ell^\star_t(\bg_t) - \ell^\star_t(\bz_t)
\geq \langle \bq_t, \bg_t - \bz_t\rangle + \frac{1}{2L_t} \| \bg_t - \bz_t\|_\star^2,
\]
for all $\bq_t \in \partial \ell_t^\star(\bz_t)$.
Defining $\bx'_{t+1} \triangleq \nabla \psi_{t+1}^\star(\btheta_t - \bg_t)$, by Theorem~\ref{thm:props_fenchel}, we have $\bx_{t+1}' \in \partial \ell_t^\star(\bz_t)$. Hence, we can select $\bq_t$ such that $\bx_{t+1}' = \bq_t$.
Finally, using the assumption on $\lambda_{t+1} \geq L_t $, we have the stated bound.
\end{proof}

\section{Going Beyond Subgradients with aProx}
\label{sec:aprox}

Till now, in all the updates we have considered $\bz_t$ was set to be a subgradient of $\ell_t$ in a specific point. In this section, we show that we can go beyond this idea.

\citet{AsiD19} introduced aProx updates, that is proximal updates on surrogate loss functions. In particular, they used truncated linear lower bounds to the loss functions as surrogate functions. These simple surrogates are motivated by the fact that they are strictly better than linear approximation and at the same time they allow writing the proximal update in a closed form. Moreover, they showed empirically that in certain situations the performance of the algorithms becomes much more resistant to the tuning of the stepsizes. 

One might just use the same truncated lower bounds in implicit FTRL, but it would not be clear why this should give any advantage in the theoretical bound. Indeed, even in \citet{AsiD19} it is not completely clear what part of the theory tells us that we should expect a better performance from these updates.

Here, we show how the \emph{updates in the generalized implicit FTRL are actually a generalization of the aProx ones}. In particular, we generalize the aProx updates to arbitrary regularizers and show that all of them satisfy $H_t(\bz_t)\leq H_t(\bg_t)$ and $H'_t(\bz_t)\leq H'_t(\bg_t)$. In words, the aProx updates are guaranteed to be at least as good as the subgradient $\bg_t$ in minimizing the worst-case regret.

In order to consider truncated linear lower bounds to the functions $\ell_t$, in this section we will assume that the loss functions $\ell_t$ are lower bounded. Given that the regret is invariant to additive constants in the losses, without loss of generality we can assume the lower bound to be 0 for all the loss functions. Hence, define the truncated linear model $\hat{\ell}_t:V\to \R$ around $\bx_t$ to be
\[
\hat{\ell}_t(\bx) 
\triangleq \max(\ell_t(\bx_t)+ \langle \bg_t, \bx - \bx_t\rangle,0),
\]
where $\bg_t \in \partial \ell_t(\bx_t)$. For brevity of notation, our notation does not stress the fact that the truncated linear model depends on $\bx_t$ and the specific subgradient $\bg_t$.

To idea to extend aProx to the case of generalized implicit FTRL, we use the truncated linear lower bound in the update of $\bz_t$. So, we define 
\begin{equation}
\label{eq:aprox_h}
\bz_t = \argmin_{\bz} \ \psi^\star_{t+1,V}(\btheta_{t}-\bz_t) + \hat{\ell}^\star_t(\bz_t)
\end{equation}
or
\begin{equation}
\label{eq:aprox_hprime}
\bz_t=\argmin_{\bz} \ \psi^\star_{t,V}(\btheta_{t}-\bz_t) + \hat{\ell}^\star_t(\bz_t)~.
\end{equation}

\begin{theorem}
Assume the loss functions $\ell_t:V\to\R$ to be convex, closed, and subdifferentiable in $V$ for all $t$.
Set $\bz_t$ using \eqref{eq:aprox_h} or \eqref{eq:aprox_hprime}. Then, we have that $H_t(\bz_t)\leq H_t(\bg_t)$ or $H'_t(\bz_t)\leq H'_t(\bg_t)$ respectively.
\end{theorem}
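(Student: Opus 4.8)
The plan is to reduce the claim to two elementary properties of the truncated model $\hat{\ell}_t$: that it lies everywhere below $\ell_t$, and that it coincides with $\ell_t$ ``to first order'' at $\bx_t$, so that their Fenchel conjugates agree precisely at the subgradient $\bg_t$. Throughout I would regard $\ell_t$ and $\hat{\ell}_t$ as extended-valued functions on $\R^d$, equal to $+\infty$ outside $V$; both are then proper (finite on the nonempty set $V$, nowhere $-\infty$), so that Theorem~\ref{thm:props_fenchel} applies to each of them, and their conjugates $\ell_t^\star,\hat{\ell}_t^\star$ are exactly the objects appearing in $H_t,H_t'$ and in \eqref{eq:aprox_h}--\eqref{eq:aprox_hprime}.

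First I would establish $\hat{\ell}_t(\bx)\le \ell_t(\bx)$ for all $\bx$. Outside $V$ both sides are $+\infty$. For $\bx\in V$, convexity of $\ell_t$ and $\bg_t\in\partial\ell_t(\bx_t)$ give $\ell_t(\bx)\ge \ell_t(\bx_t)+\langle\bg_t,\bx-\bx_t\rangle$, while the normalization $\ell_t\ge 0$ gives $\ell_t(\bx)\ge 0$; taking the larger of the two lower bounds yields $\ell_t(\bx)\ge \hat{\ell}_t(\bx)$. Since conjugation reverses the pointwise order, this gives $\hat{\ell}_t^\star(\bz)\ge \ell_t^\star(\bz)$ for every $\bz$. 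Next I would show $\bg_t\in\partial\hat{\ell}_t(\bx_t)$: because $\ell_t(\bx_t)\ge 0$ we have $\hat{\ell}_t(\bx_t)=\ell_t(\bx_t)$, and for $\bx\in V$, $\hat{\ell}_t(\bx)\ge \ell_t(\bx_t)+\langle\bg_t,\bx-\bx_t\rangle=\hat{\ell}_t(\bx_t)+\langle\bg_t,\bx-\bx_t\rangle$, while for $\bx\notin V$ the subgradient inequality is vacuous. Hence by Theorem~\ref{thm:props_fenchel}, $\hat{\ell}_t^\star(\bg_t)=\langle\bg_t,\bx_t\rangle-\hat{\ell}_t(\bx_t)=\langle\bg_t,\bx_t\rangle-\ell_t(\bx_t)$, and applying the same theorem to $\bg_t\in\partial\ell_t(\bx_t)$ gives $\ell_t^\star(\bg_t)=\langle\bg_t,\bx_t\rangle-\ell_t(\bx_t)$; therefore $\hat{\ell}_t^\star(\bg_t)=\ell_t^\star(\bg_t)$.

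It then remains to chain these facts together: using $\ell_t^\star\le\hat{\ell}_t^\star$, then the optimality of $\bz_t$ for the $\hat{\ell}_t$-version of $H_t$ in \eqref{eq:aprox_h}, then $\hat{\ell}_t^\star(\bg_t)=\ell_t^\star(\bg_t)$,
\begin{align*}
H_t(\bz_t)
&=\psi^\star_{t+1,V}(\btheta_t-\bz_t)+\ell_t^\star(\bz_t)\\
&\le\psi^\star_{t+1,V}(\btheta_t-\bz_t)+\hat{\ell}_t^\star(\bz_t)\\
&\le\psi^\star_{t+1,V}(\btheta_t-\bg_t)+\hat{\ell}_t^\star(\bg_t)\\
&=\psi^\star_{t+1,V}(\btheta_t-\bg_t)+\ell_t^\star(\bg_t)=H_t(\bg_t).
\end{align*}
The case of \eqref{eq:aprox_hprime} is identical with $\psi^\star_{t+1,V}$ replaced by $\psi^\star_{t,V}$, since the argument uses nothing about the regularizer term beyond the fact that it is the same function of $\btheta_t-\bz$ in $H_t$ (resp.\ $H_t'$) as in the objective defining $\bz_t$. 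The only delicate points — and the ones I would be most careful about — are the domain bookkeeping when passing to conjugates (making sure $\hat{\ell}_t$ is proper, hence that Theorem~\ref{thm:props_fenchel} is legitimately invoked for it) and the tacit assumption that the $\argmin$ in \eqref{eq:aprox_h}--\eqref{eq:aprox_hprime} is attained; if attainment is an issue, one simply takes $\bz_t$ to be any point making the truncated objective no larger than its value at $\bg_t$, which is all the second inequality above actually requires.
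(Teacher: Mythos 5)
Your proposal is correct and follows essentially the same route as the paper's proof: the identical chain $\ell_t^\star(\bz_t)\le\hat{\ell}_t^\star(\bz_t)$, optimality of $\bz_t$ for the truncated objective, and $\hat{\ell}_t^\star(\bg_t)=\ell_t^\star(\bg_t)$ via $\bg_t\in\partial\hat{\ell}_t(\bx_t)$ and $\hat{\ell}_t(\bx_t)=\ell_t(\bx_t)$. You merely make explicit some facts the paper asserts without proof (that $\hat{\ell}_t\le\ell_t$, that $\bg_t$ is a subgradient of the truncated model at $\bx_t$, and the properness/attainment bookkeeping), which is a harmless refinement rather than a different argument.
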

\begin{proof}
We consider the update \eqref{eq:aprox_h}, the other case is very similar and we omit it.

First, we derive some inequalities on the quantities of interest.
From Theorem~\ref{thm:props_fenchel}, given that $\bg_t \in \partial \hat{\ell}_t(\bx_t)$ and $\bg_t \in \partial \ell_t(\bx_t)$ we have both $\ell_t(\bx_t)+\ell^\star(\bg_t)= \langle \bg_t, \bx_t\rangle$ and $\hat{\ell}_t(\bx_t)+\hat{\ell}^\star(\bg_t)= \langle \bg_t, \bx_t\rangle$.
Moreover, given that $\hat \ell_t(\bx)\leq \ell_t(\bx)$ for any $\bx$, we have $\hat{\ell}^\star_t(\bz) \geq \ell^\star_t(\bz)$ for any $\bz$. Finally, by the definition of truncated linear lower bound, we have $\ell_t(\bx_t) = \hat{\ell}_t(\bx_t)$.

Hence, we have
\begin{align*}
&\psi^\star_{t+1,V}(\btheta_{t}-\bz_t) + \ell^\star_t(\bz_t)\\
&\quad \leq \psi^\star_{t+1,V}(\btheta_{t}-\bz_t) + \hat{\ell}^\star_t(\bz_t)\\
&\quad= \min_{\bz} \ \psi^\star_{t+1,V}(\btheta_{t}-\bz) + \hat{\ell}^\star_t(\bz)\\
&\quad\leq \psi^\star_{t+1,V}(\btheta_{t}-\bg_t) + \hat{\ell}^\star_t(\bg_t) \\
&\quad=\psi^\star_{t+1,V}(\btheta_{t}-\bg_t) + \langle \bg_t, \bx_t\rangle- \hat{\ell}_t(\bx_t) \\
&\quad=\psi^\star_{t+1,V}(\btheta_{t}-\bg_t) + \langle \bg_t, \bx_t\rangle- \ell_t(\bx_t)\\
&\quad=\psi^\star_{t+1,V}(\btheta_{t}-\bg_t) + \ell^\star_t(\bg_t) = H_t(\bg_t)~. \qedhere
\end{align*}
\end{proof}

We can also immediately write closed form updates for generalized implicit FTRL with  regularizer $\psi_t(\bx)=\frac{\lambda_t}{2}\|\bx\|^2$, that mirror the ones of aProx. The proof is in Section~\ref{sec:proof_aprox}.
\begin{cor}
\label{cor:aprox}
Set $\psi_{t}=\frac{\lambda_t}{2}\|\bx\|^2_2$ and $\bg_t \in \partial \ell_t(\bx_t)$.
Setting $\bz_t$ as in \eqref{eq:aprox_h}, we have that the update of generalized implicit FTRL is
\[
\bx_{t+1} 
= \frac{\lambda_t}{\lambda_{t+1}}\bx_t - \min\left(\frac{1}{\lambda_{t+1}}, \frac{\ell_t(\bx_t)}{\|\bg_t\|^2}\right) \bg_t~.
\]
On the other hand, setting $\bz_t$ as in \eqref{eq:aprox_hprime}, the update is
\[
\bx_{t+1} 
= \frac{\lambda_t}{\lambda_{t+1}} \bx_t - \min\left(\frac{1}{\lambda_{t+1}}, \frac{\lambda_t}{\lambda_{t+1}} \frac{\ell_t(\bx_t)}{\|\bg_t\|^2}\right) \bg_t~.
\]
\end{cor}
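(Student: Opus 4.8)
The plan is to specialize Algorithm~\ref{alg:giftrl} with $\psi_t=\tfrac{\lambda_t}{2}\|\cdot\|_2^2$ and, as in Section~\ref{sec:prox}, $V=\R^d$, so that $\psi_{t,V}^\star(\btheta)=\tfrac{1}{2\lambda_t}\|\btheta\|_2^2$ with $\nabla\psi_{t,V}^\star(\btheta)=\btheta/\lambda_t$, and line~3 of the algorithm forces $\btheta_t=\lambda_t\bx_t$, hence $\bx_{t+1}=\btheta_{t+1}/\lambda_{t+1}=\tfrac{\lambda_t}{\lambda_{t+1}}\bx_t-\tfrac{1}{\lambda_{t+1}}\bz_t$. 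The first step would be to compute the Fenchel conjugate of the truncated model. Writing $\hat\ell_t(\bx)=\sigma(\langle\bg_t,\bx\rangle+c)$ with $\sigma(u)=\max(u,0)$ and $c=\ell_t(\bx_t)-\langle\bg_t,\bx_t\rangle$, a direct supremum computation --- any component of $\bz$ orthogonal to $\bg_t$ makes the supremum $+\infty$, and for $\bz=s\bg_t$ one splits on the sign of $\langle\bg_t,\bx\rangle+c$ --- shows that $\hat\ell_t^\star(\bz)=+\infty$ unless $\bz=s\bg_t$ with $s\in[0,1]$, in which case $\hat\ell_t^\star(s\bg_t)=-sc=s\big(\langle\bg_t,\bx_t\rangle-\ell_t(\bx_t)\big)$. (I would dispose of $\bg_t=\bzero$ separately: then $\bx_t$ minimizes $\ell_t$, so $\ell_t(\bx_t)=0$ by the normalization and both claimed updates collapse to $\bx_{t+1}=\tfrac{\lambda_t}{\lambda_{t+1}}\bx_t$.) Thus in both \eqref{eq:aprox_h} and \eqref{eq:aprox_hprime} the minimization over $\bz\in\R^d$ reduces to minimizing a convex quadratic of a single scalar $s$ over $[0,1]$.

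For \eqref{eq:aprox_hprime} I would then use the first-order characterization already obtained in \eqref{eq:cond_update_inexact}, with $\ell_t$ replaced by $\hat\ell_t$: the minimizer of $H'_t$ built from $\hat\ell_t$ satisfies $\bz_t\in\partial\hat\ell_t(\bw_t)$ where $\bw_t:=\btheta_{t+1}/\lambda_t=\bx_t-\bz_t/\lambda_t$, that is, $\bw_t=\Prox_{\hat\ell_t/\lambda_t}(\bx_t)$ --- exactly the aProx step of \citet{AsiD19}. Its closed form follows by the standard two-case kink argument: either $\bx_t-\tfrac{1}{\lambda_t}\bg_t$ keeps the affine part of the model nonnegative, which holds iff $\tfrac{1}{\lambda_t}\le\ell_t(\bx_t)/\|\bg_t\|_2^2$, or the truncation is active and $\bw_t$ lands on $\{\bx:\ell_t(\bx_t)+\langle\bg_t,\bx-\bx_t\rangle=0\}$, giving $\bw_t=\bx_t-\tfrac{\ell_t(\bx_t)}{\|\bg_t\|_2^2}\bg_t$; the two glue into $\bw_t=\bx_t-\min\!\big(\tfrac{1}{\lambda_t},\tfrac{\ell_t(\bx_t)}{\|\bg_t\|_2^2}\big)\bg_t$. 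Since $\bx_{t+1}=\tfrac{\lambda_t}{\lambda_{t+1}}\bw_t$ and the positive factor $\lambda_t/\lambda_{t+1}$ can be pushed inside the $\min$, this is the second displayed update.

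For \eqref{eq:aprox_h} I would run the same argument while tracking the shrunk center: \eqref{eq:cond_update_exact} with $\hat\ell_t$ gives $\bx_{t+1}=\Prox_{\hat\ell_t/\lambda_{t+1}}\!\big(\tfrac{\lambda_t}{\lambda_{t+1}}\bx_t\big)$; equivalently, and this is the cleaner route, I would substitute $\bz=s\bg_t$ into $H_t$ to get the scalar convex quadratic $q(s)=\tfrac{1}{2\lambda_{t+1}}\|\lambda_t\bx_t-s\bg_t\|_2^2+s\langle\bg_t,\bx_t\rangle-s\ell_t(\bx_t)$, minimize it over $[0,1]$ by clipping the unconstrained minimizer to the box, and substitute the optimal $s$ back into $\bx_{t+1}=\tfrac{\lambda_t}{\lambda_{t+1}}\bx_t-\tfrac{s}{\lambda_{t+1}}\bg_t$ to obtain the first displayed update.

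The hard part will be the piecewise bookkeeping rather than any single computation: the kink of $\hat\ell_t$ matches exactly the two endpoints $s\in\{0,1\}$ of $\dom\hat\ell_t^\star$, and one must verify that the three possible regimes --- an interior stationary point, the saturated endpoint $s=1$, and the endpoint $s=0$ --- collapse into a single $\min$ of two terms rather than a three-way split. This is precisely where the normalization $\ell_t\ge0$ enters: it makes the unconstrained minimizer (equivalently, the proximal displacement) nonnegative, so the lower clip at $0$ never binds except in the degenerate case $\ell_t(\bx_t)=0$, which is consistent with both formulas. Everything else is routine manipulation of quadratics and their conjugates.
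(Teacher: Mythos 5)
Your overall route is the same as the paper's: specialize to $V=\R^d$ so that $\btheta_t=\lambda_t\bx_t$, compute the proximal operator of the truncated model $\hat{\ell}_t$ in closed form, and plug it into \eqref{eq:prox_ftrl_1} and \eqref{eq:prox_ftrl_2}. Your explicit computation of $\hat{\ell}_t^\star$ (finite only on $\{s\bg_t: s\in[0,1]\}$ with value $s(\langle\bg_t,\bx_t\rangle-\ell_t(\bx_t))$) and the reduction to a scalar quadratic on $[0,1]$ is a more detailed version of what the paper does in one line, and your derivation of the \emph{second} display is correct: there the prox is centered at $\bx_t$ itself, where $\hat{\ell}_t(\bx_t)=\ell_t(\bx_t)$, the stationary point is $s^\star=\lambda_t\ell_t(\bx_t)/\|\bg_t\|_2^2\ge 0$, and pushing $\lambda_t/\lambda_{t+1}$ inside the $\min$ gives the stated formula.

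For the \emph{first} display, however, the step you assert does not go through. Minimizing your own quadratic $q(s)=\tfrac{1}{2\lambda_{t+1}}\|\lambda_t\bx_t-s\bg_t\|_2^2+s\langle\bg_t,\bx_t\rangle-s\ell_t(\bx_t)$ gives the unconstrained stationary point
\[
s^\star=\frac{\lambda_{t+1}\ell_t(\bx_t)-(\lambda_{t+1}-\lambda_t)\langle\bg_t,\bx_t\rangle}{\|\bg_t\|_2^2},
\]
not $\lambda_{t+1}\ell_t(\bx_t)/\|\bg_t\|_2^2$. Equivalently, $\Prox_{\hat{\ell}_t/\lambda_{t+1}}$ is applied at the shrunk center $\tfrac{\lambda_t}{\lambda_{t+1}}\bx_t$, where the truncated model takes the value $\max\bigl(\ell_t(\bx_t)-(1-\tfrac{\lambda_t}{\lambda_{t+1}})\langle\bg_t,\bx_t\rangle,\,0\bigr)$ rather than $\ell_t(\bx_t)$; the two coincide only when $\lambda_{t+1}=\lambda_t$ or $\langle\bg_t,\bx_t\rangle=0$. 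Your claim that nonnegativity of $\ell_t$ makes the lower clip at $s=0$ never bind also fails in this case, since the shifted numerator can be negative even when $\ell_t(\bx_t)>0$, so the three-regime bookkeeping does not collapse to a two-way $\min$. To be fair, the paper's own proof has the same blind spot --- it quotes the prox displacement with $\ell_t(\bx_t)$ in the numerator irrespective of the point at which the prox is evaluated --- but executing your plan honestly exposes this discrepancy rather than closing it. You should either prove the corrected formula with the truncated model evaluated at $\tfrac{\lambda_t}{\lambda_{t+1}}\bx_t$, or note that the first display as stated holds only for constant regularization.
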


\section{Empirical Evaluation}
\label{sec:exp}

\begin{figure}[t]
\centering
\includegraphics[width=0.4\textwidth]{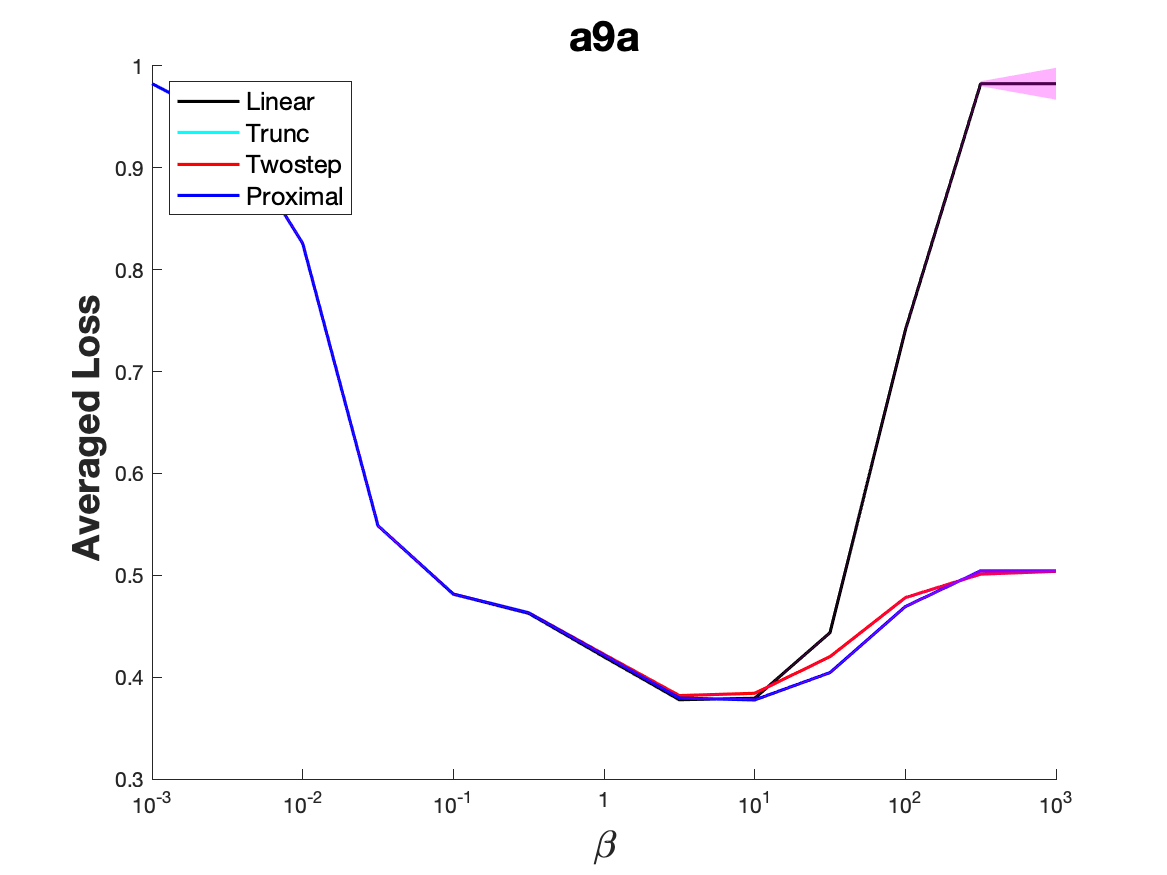} 
\includegraphics[width=0.4\textwidth]{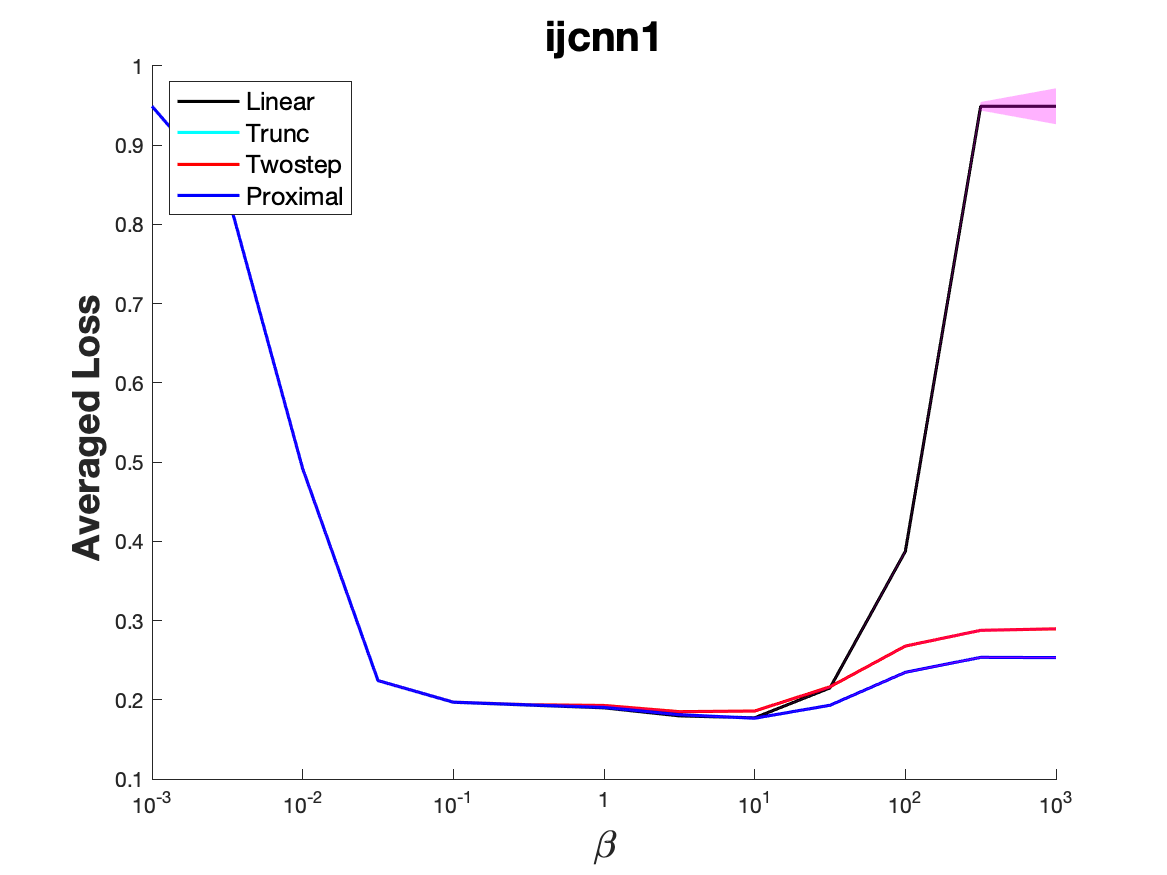}
\includegraphics[width=0.4\textwidth]{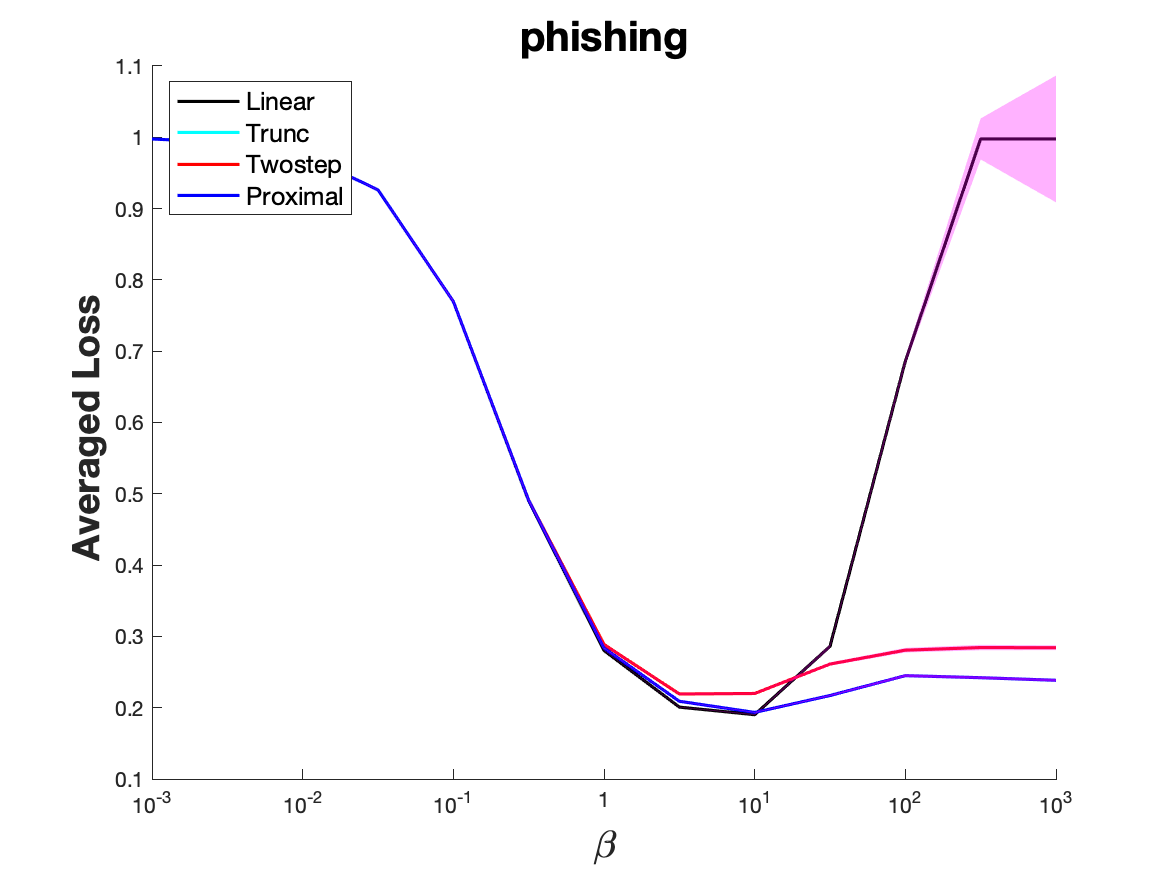}
\vspace{-0.5cm}
\caption{Hinge loss, averaged loss vs. hyperparameter $\beta$.}
\label{fig:hinge_plot}
\end{figure}

\begin{figure}[t]
\centering
\includegraphics[width=0.4\textwidth]{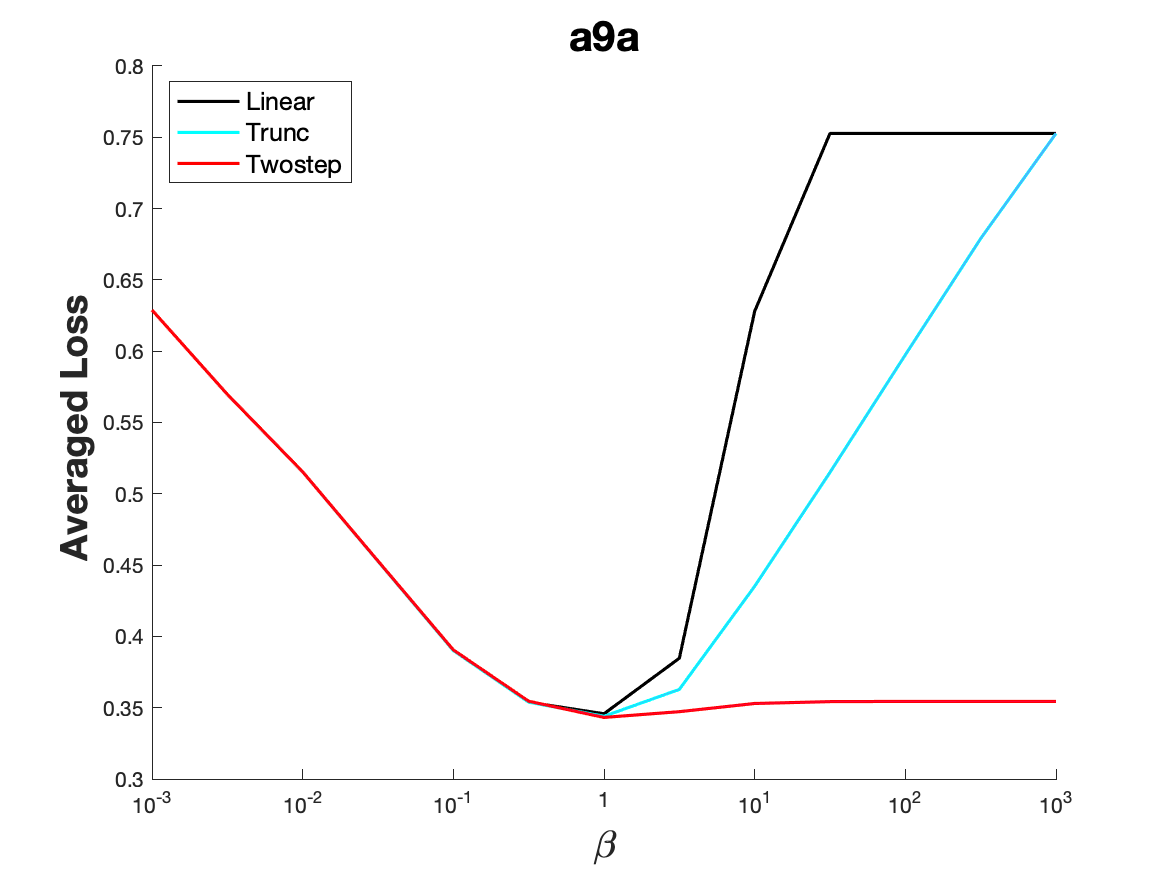}  
\includegraphics[width=0.4\textwidth]{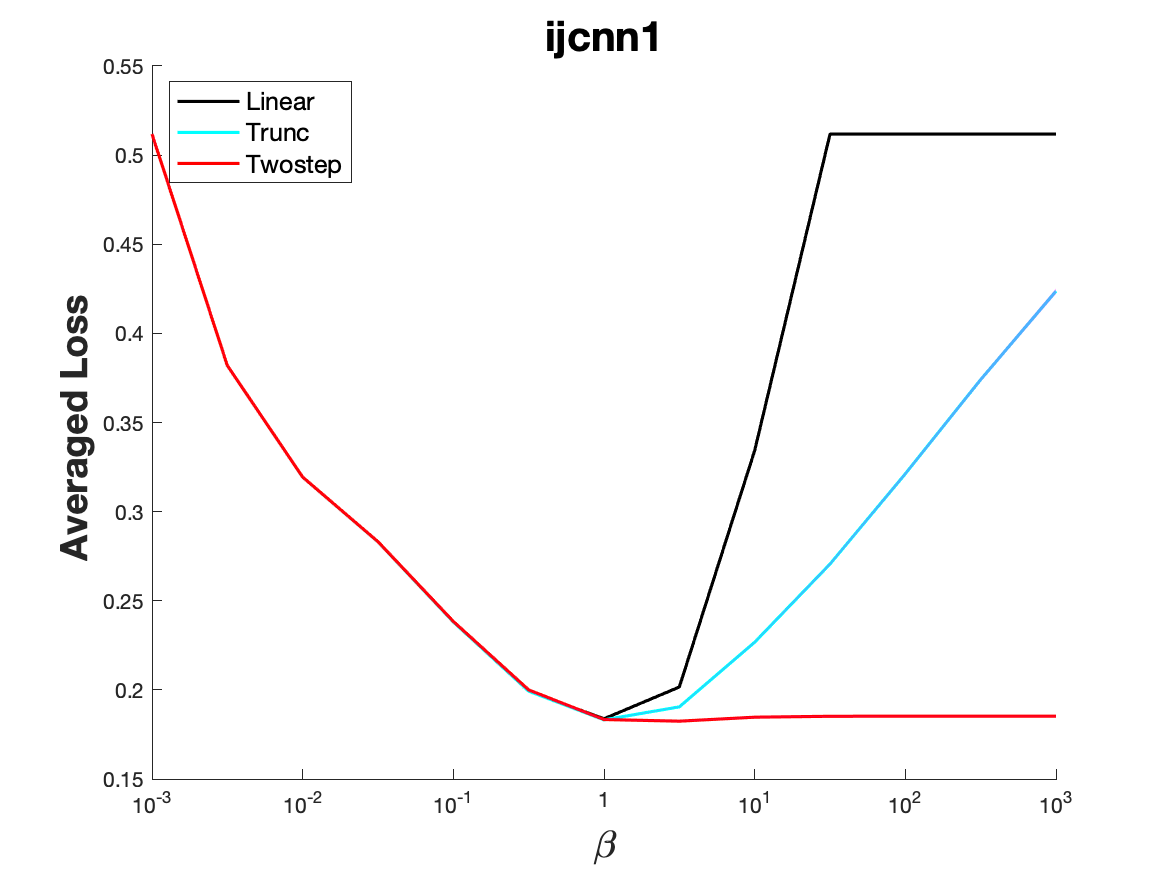} 
\includegraphics[width=0.4\textwidth]{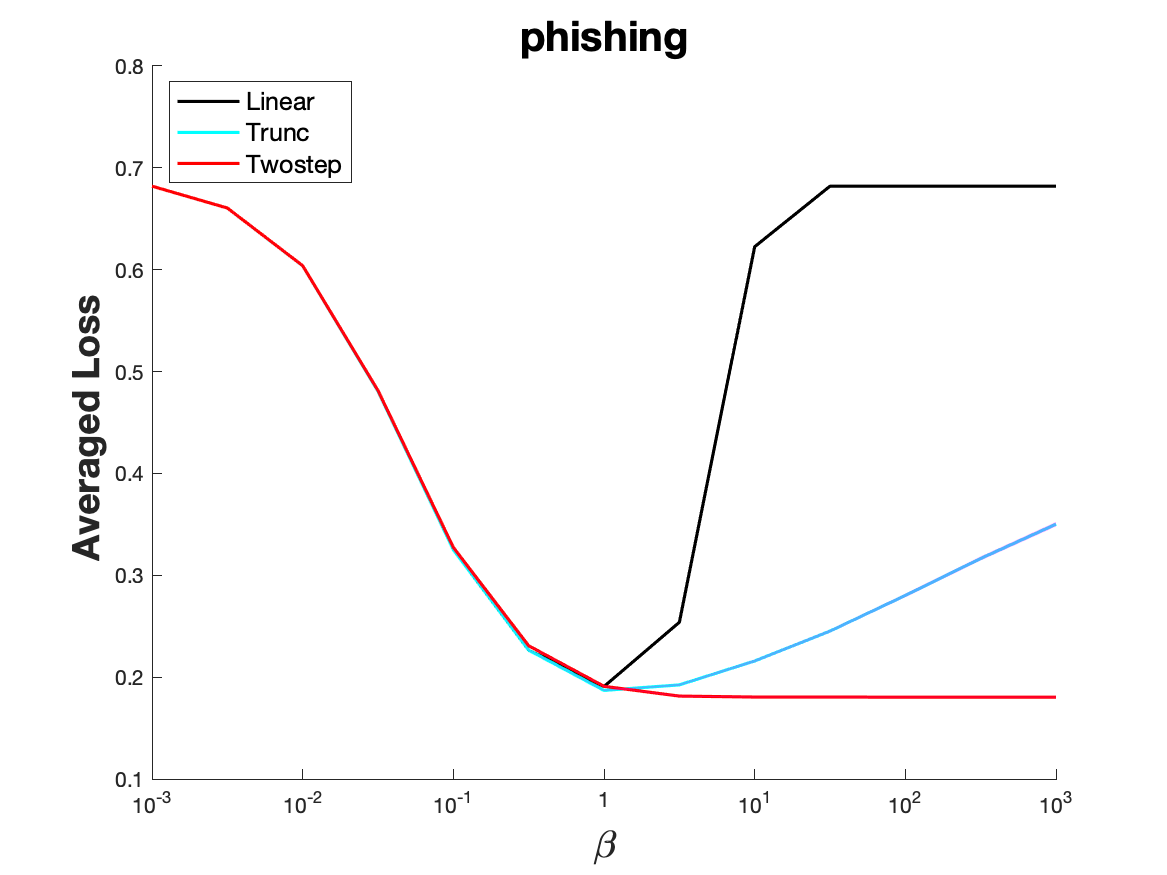} 
\vspace{-0.5cm}
\caption{Logistic loss, averaged loss vs. hyperparameter $\beta$.}
\label{fig:logistic_plot}
\end{figure}

\begin{figure}[ht]
\centering
\includegraphics[width=0.4\textwidth]{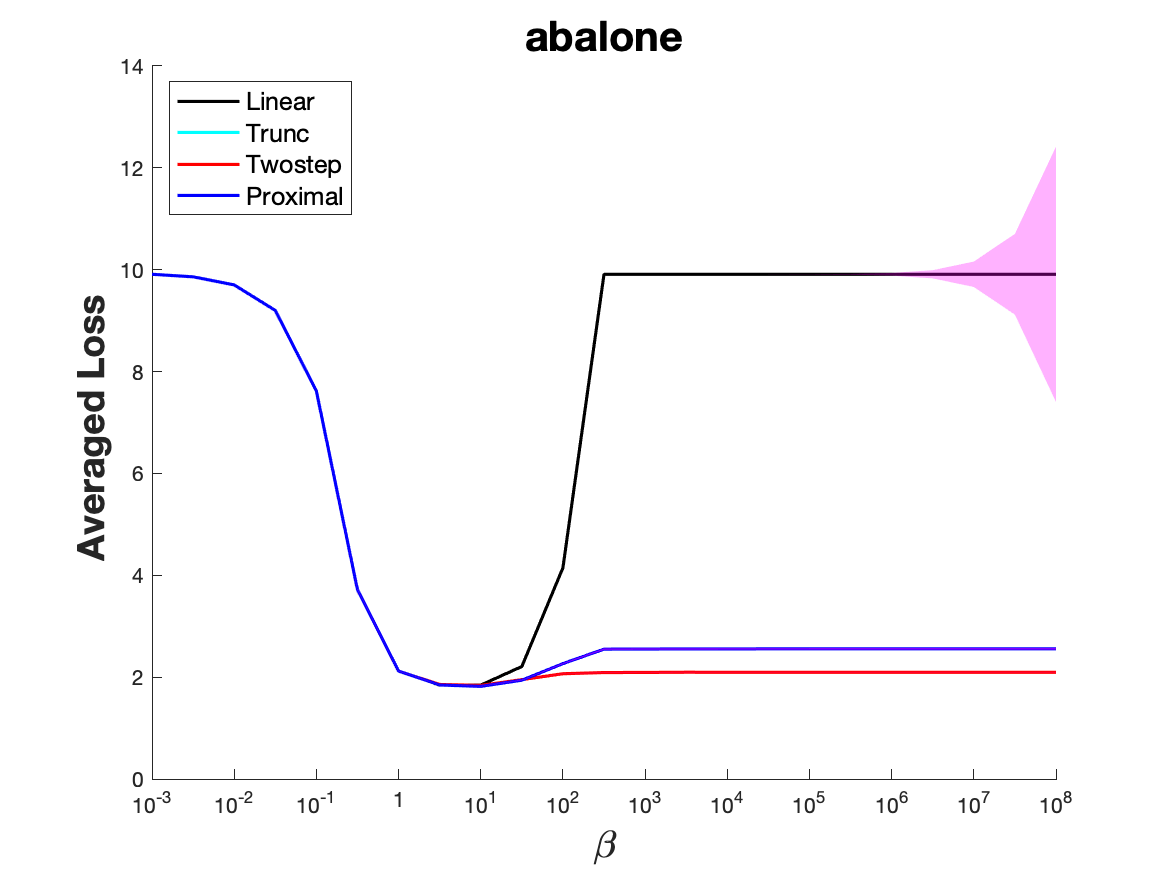}  
\includegraphics[width=0.4\textwidth]{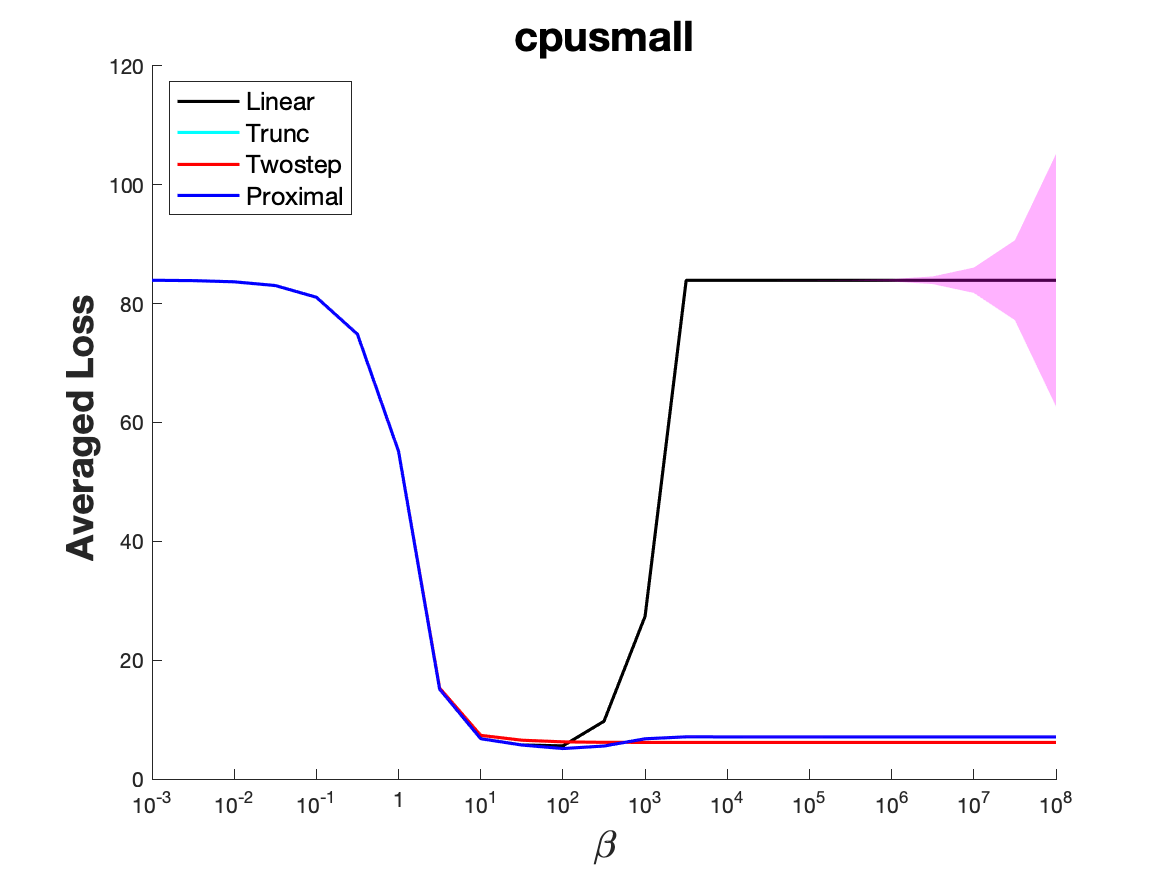} 
\includegraphics[width=0.4\textwidth]{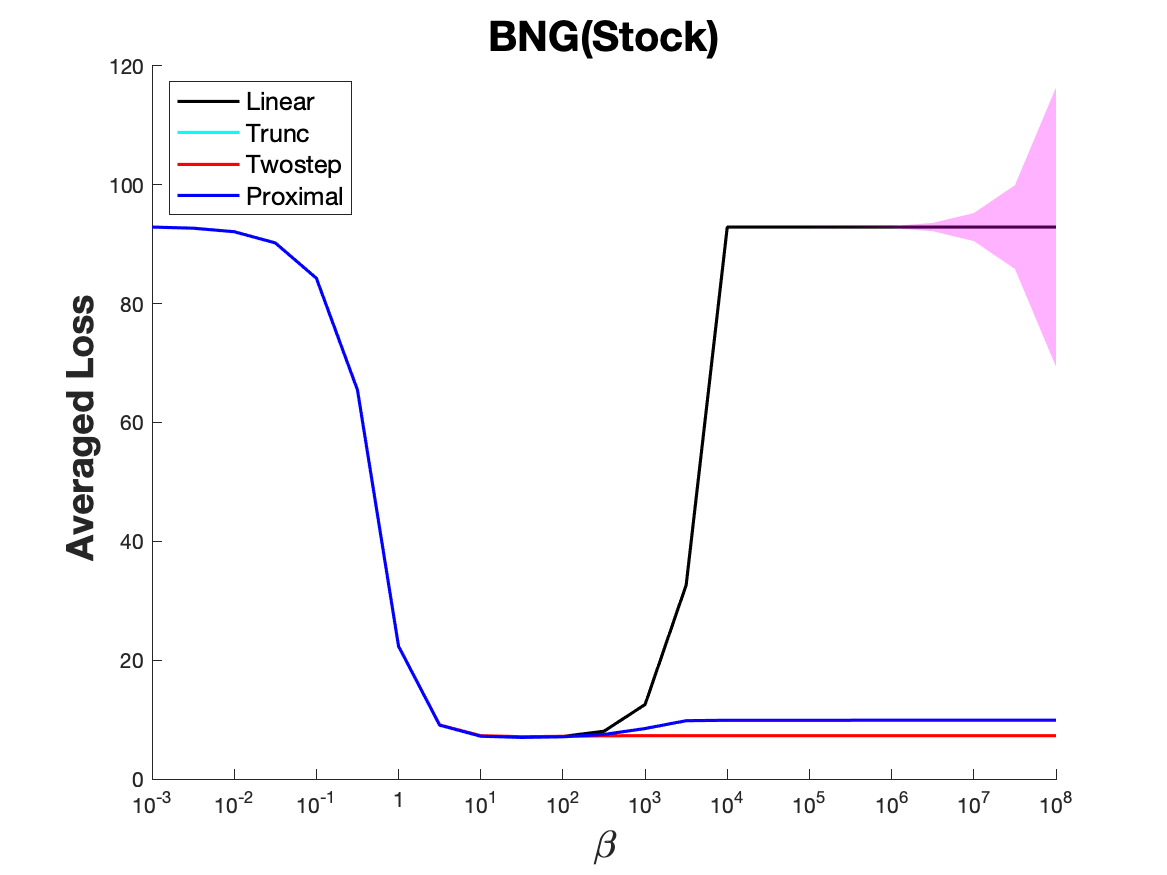} 
\vspace{-0.5cm}
\caption{Absolute loss, averaged loss vs. hyperparameter $\beta$.}
\label{fig:reg_plot}
\end{figure}

As we said, in the worst case scenario any kind of implicit update cannot give any advantage over the usual updates. However, in practice it is well-known that things are vastly different. Hence, in this section, we compare the performance of different choices of $\bz_t$ in Algorithm~\ref{alg:giftrl} when $\psi_t(\bx)=\frac{\lambda_t}{2}\|\bx\|_2^2$.
In particular, we consider:
\begin{itemize}
\setlength{\itemsep}{0pt}%
\setlength{\parskip}{0pt}
\vspace{-0.3cm}
\item FTRL with linearized losses (Linear): $\bz_t=\bg_t$;
\item Implicit FTRL with aProx updates (Trunc): $\bz_t = \min\left\{ 1,\frac{\lambda_t \ell_t(\bx_t)}{\| \bg_t\|^2}\right\}\bg_t$;
\item Implicit FTRL with two-step updates (Twostep): $\bz_t = \partial \ell_t(\bx_t - \bg_t/\lambda_t)$;
\item Implicit FTRL with \eqref{eq:prox_ftrl_2} when the proximal operator has a closed form (Proximal).
\vspace{-0.3cm}
\end{itemize}
We adopt the choice of $\lambda_t$ from Corollary~\ref{cor:temporal}.

We conduct linear prediction experiments on datasets from LibSVM~\citep{ChangL11}. We show here experiments on classification tasks using the hinge loss and the logistic loss, and regression tasks with absolute loss. We normalize the datasets and added a constant bias term to the features. Given that in the online learning setting, we do not have the training data and validation data to tune the $\beta$, we will plot the averaged loss, $\frac{1}{t}\sum_{i=1}^t \ell_i(\bx_i)$, versus different choice of $\beta$, that at the same time show the algorithms' sensitivity to the hyperparameter $\beta$ and their best achievable performance. We consider $\beta \in [10^{-3},10^3]$ for hinge loss and logistic loss, and $\beta \in [10^{-3},10^8]$ for the absolute loss. Each algorithm is run 15 times, we plot the average of the averaged losses and the $95\%$ confidence interval. Note that the confidence intervals so small to be invisible, but for the larger values of the $\beta$ for the Linear updates.

Figure~\ref{fig:hinge_plot} and Figure~\ref{fig:logistic_plot} show the averaged loss versus different selections of hyperparameter $\beta$ for classification tasks with hinge loss and logistic loss respectively. Note that with the hinge loss 
aProx updates and proximal updates are completely equivalent.
In all experiments, FTRL with linearized updates is more sensitive to the setting of $\beta$,  and its performance is almost uniformly worse than all the other generalized implicit updates. This is in line with previous results in \citet{AsiD19} in the offline setting. With the logistic loss, the proximal operator does not have a closed-form solution. In all the classification experiments, the performance of generalized implicit FTRL with two-step updates seems remarkable and a possible viable alternative to aProx.  The confidence intervals for all implicit updates have a width smaller than 0.01, making them too narrow to be visible in the figures. In contrast, when using hinge loss, the performance of FTRL with linear models exhibits significant fluctuations across different repetitions when a large learning rate is used. This observation provides evidence supporting our assertion that the selection of hyperparameter $\beta$ greatly affects the performance of FTRL with linear models, while implicit updates demonstrate robustness.

Figure~\ref{fig:reg_plot} shows that FTRL with linearized updates is very sensitive to the choice of the hyperparameter $\beta$, while the implicit FTRL updates are robust. Again, Implicit FTRL with two-step updates achieves essentially the best performance. The confidence intervals in the regression tasks lead to a similar conclusion as in the classification tasks.

\section{Conclusion and Future Work}
In this work, we propose a new framework: generalized implicit Follow-the-Regularized-Leader. We show that generalized implicit FTRL can not only recover known algorithms, e.g., implicit FTRL and FTRL with linearized losses, but it also provides a theoretical guideline to design new algorithms, such as the extensions of aProx and Mirror-Prox. Indeed, we believe that the main contribution of our work lies precisely in the fact that it provides a unifying framework that is general, flexible, and theoretically grounded.

In the future, we plan to explore further this framework designing new $\bz_t$ with low computational complexity. This is a promising direction because the two-steps update seems to be already a valid alternative to the aProx updates, even if it comes at the computational expense of querying an additional gradient in each round.

\section*{Acknowledgements}
We thank Alex Shtoff for discussion and feedback on a preliminary version of this paper.
Francesco Orabona is supported by the National Science Foundation under the grants no. 2022446 ``Foundations of Data Science Institute'' and no. 2046096 ``CAREER: Parameter-free Optimization Algorithms for Machine Learning''.

\bibliographystyle{icml2023}
\bibliography{../../../learning}

\onecolumn
\appendix
\section{Update Rule for Common Losses}
\label{sec:updates}

In this section, we report the proximal operator of common losses for easy referencing. These formulas are well-known and they can be found, for example, in \citet{CrammerDKSSS06,KulisB10}.

\begin{align*}
\ell_t(\bx)=\max(1-y_t \langle \bs_t, \bx\rangle,0) &\Rightarrow \Prox_{\frac{\ell_t}{\lambda}}(\bx)=\bx+\min\left(\frac{1}{\lambda}, \frac{\max(1-y_t \langle \bs_t, \bx\rangle,0)}{\|\bs_t\|^2}\right) y_t \bs_t\\
\ell_t(\bx)=|\langle \bs_t, \bx\rangle-y_t| &\Rightarrow \Prox_{\frac{\ell_t}{\lambda}}(\bx)=\bx-\min\left(\frac{1}{\lambda}, \frac{|\langle \bs_t, \bx\rangle-y_t|}{\|\bs_t\|^2}\right) \bs_t\\
\ell_t(\bx)=\frac{1}{2}(\langle \bs_t,\bx\rangle - y_t)^2 &\Rightarrow \Prox_{\frac{\ell_t}{\lambda}}(\bx)=\bx-\frac{(\langle \bs_t, \bx\rangle-y_t) \bs_t}{\lambda+  \|\bs_t\|_2^2}~.
\end{align*}

\section{Proof of Corollary~\ref{cor:temporal}}
\label{sec:prof_temporal}

\begin{proof}
From the regret guarantee in Lemma~\ref{lemma:l1}, we have that 
\[
\Regret_T(\bu)
\leq \psi_{T+1}(\bu) + \beta^2 \sum_{t=1}^T \gamma_t
\leq \lambda_{T+1} (\psi(\bu)+\beta^2), \ \forall \bu \in V~.
\]

Now, we upper bound $\sum_{t=1}^T \gamma_t$ in two different ways.
In the first upper bound, we have 
\begin{align*}
\sum_{t=1}^T \gamma_t
&= \sum_{t=1}^T \left(\ell_t(\bx_t)-\ell(\bx_{t+1})-\frac{\lambda_t}{2}\|\bx_{t+1}-\bx_t\|^2\right)
= \ell_1(\bx_1) - \ell_T(\bx_{T+1}) + \sum_{t=2}^T (\ell_t(\bx_t) - \ell_{t-1}(\bx_t))\\
&\leq \ell_1(\bx_1) - \ell_T(\bx_{T+1}) + V_T~.
\end{align*}

For the second upper bound, we have
\begin{align*}
\gamma_t
&= \ell_t(\bx_t)-\ell(\bx_{t+1})-\frac{\lambda_t}{2}\|\bx_{t+1}-\bx_t\|^2
\leq \langle \bg_t, \bx_t-\bx_{t+1}\rangle -\frac{\lambda_t}{2}\|\bx_{t+1}-\bx_t\|^2\\
&\leq \frac{\|\bg_t\|^2_\star}{2\lambda_t} + \frac{\lambda_t}{2}\|\bx_{t+1}-\bx_t\|^2- \frac{\lambda_t}{2}\|\bx_{t+1}-\bx_t\|^2
=\frac{\|\bg_t\|^2_\star}{2\lambda_t}
\leq \beta\frac{\|\bg_t\|_\star}{2},
\end{align*}
where we used Fenchel-Young inequality and the second lower bound is obtained by using the fact that $\lambda_t\geq \lambda_1=\frac{G}{\beta}$.
Hence, we have
\[
\lambda_{t+1}
= \lambda_t +\frac{\gamma_t}{\beta^2}
\leq \lambda_t + \min\left(\frac{\|\bg_t\|_\star}{2\beta},\frac{\|\bg_t\|^2_\star}{2\beta^2\lambda_t}\right)
\]
Using Lemma 6.1 in \citet{CampolongoO20} and taking into account the fact that $\lambda_1=\frac{G}{\beta}$, we have
\[
\lambda_{T+1} \leq \frac{G}{\beta} + \sqrt{\frac{5}{4\beta^2}\sum_{t=1}^T \|\bg_t\|^2_\star}~.
\]
Putting all together, we have the stated bound.
\end{proof}

\section{Proof of Corollary~\ref{cor:aprox}}
\label{sec:proof_aprox}

\begin{proof}
The proximal operator of $\frac{\hat{\ell}_t}{\lambda}$ is
\[
\Prox_{\frac{\hat{\ell}_t}{\lambda}} (\bx) 
= \bx - \min\left(\frac{1}{\lambda}, \frac{\ell_t(\bx_t)}{\|\bg_t\|^2}\right) \bg_t~.
\]
Hence, from \eqref{eq:prox_ftrl_1}, we have
\begin{align*}
\bx_{t+1} 
&= \frac{\lambda_t}{\lambda_{t+1}} \left(\bx_t - \min\left(\frac{1}{\lambda_t}, \frac{\ell_t(\bx_t)}{\|\bg_t\|^2}\right) \bg_t\right) 
= \frac{\lambda_t}{\lambda_{t+1}} \bx_t - \min\left(\frac{1}{\lambda_{t+1}}, \frac{\lambda_t}{\lambda_{t+1}} \frac{\ell_t(\bx_t)}{\|\bg_t\|^2}\right) \bg_t~.
\end{align*}
Instead, from \eqref{eq:prox_ftrl_2}, we have
\[
\bx_{t+1} 
= \frac{\lambda_t}{\lambda_{t+1}}\bx_t - \min\left(\frac{1}{\lambda_{t+1}}, \frac{\ell_t(\bx_t)}{\|\bg_t\|^2}\right) \bg_t~. \qedhere
\]
\end{proof}

\end{document}